\documentclass[11pt]{article}

\usepackage[margin=1in]{geometry}
\usepackage{amsmath,amssymb,amsthm,mathtools}
\usepackage{booktabs}
\usepackage{enumitem}
\usepackage{microtype}
\usepackage[dvipsnames]{xcolor}
\usepackage{tikz}
\usetikzlibrary{arrows.meta,positioning,calc}
\usepackage{placeins}
\usepackage[numbers,sort&compress]{natbib}
\usepackage[colorlinks=true,allcolors=MidnightBlue]{hyperref}
\usepackage[nameinlink,capitalise]{cleveref}

\hypersetup{
  pdftitle={Common Geodesics Do Not Guarantee Fisher Consistency of the Structured SVM: Minimal Counterexamples and a Tree-Metric Classification},
  pdfauthor={Jintao Fei and Jiangying Luo},
  pdfsubject={Fisher consistency of structured max-margin losses},
  pdfkeywords={structured SVM, Fisher consistency, max-margin loss, modular metric, tree metric}
}

\newtheorem{theorem}{Theorem}[section]
\newtheorem{proposition}[theorem]{Proposition}
\newtheorem{lemma}[theorem]{Lemma}
\newtheorem{corollary}[theorem]{Corollary}
\theoremstyle{definition}

\theoremstyle{remark}

\DeclareMathOperator*{\argmin}{arg\,min}
\DeclareMathOperator*{\argmax}{arg\,max}
\newcommand{\Y}{\mathcal Y}
\newcommand{\DeltaY}{\Delta_{\Y}}
\newcommand{\R}{\mathbb R}
\newcommand{\one}{\mathbf 1}
\newcommand{\Bayes}{\mathcal B}
\newcommand{\Vopt}{\mathcal V^{\star}}

\title{Common Geodesics Do Not Guarantee Fisher Consistency\\
of the Structured SVM: Minimal Counterexamples and\\
a Tree-Metric Classification}
\author{
Jintao Fei\\
\small JD.com\\
\small \texttt{fei-jintao@outlook.com}
\and
Jiangying Luo\\
\small Tsinghua University\\
\small \texttt{luo-jiangying@outlook.com}
}
\date{}

\begin{document}
\maketitle

\begin{abstract}
A known necessary condition for Fisher consistency of the structured support vector machine requires the task loss to be a metric for which every output triple has a common geodesic point.
We show that this condition is not sufficient for the canonical coordinate-wise argmax decoder.
A four-output unit star admits an exactly optimal score vector whose maximizers are all strictly non-Bayes, and four outputs are minimal among metrics satisfying the condition.
We then completely classify positively weighted tree metrics whose vertex set is the output space: argmax consistency holds if and only if the tree is a path.
The failure on branching trees is confined to boundary distributions; every tree retains the argmax property at every full-support distribution.
Among metrics satisfying the common-geodesic condition, five outputs are necessary and sufficient for a full-support counterexample; $K_{2,3}$ is the smallest member of an infinite $K_{m,n}$ family.
We additionally give a full-support counterexample for the three-dimensional Hamming cube.
All optimality claims have exact primal--dual certificates.
The counterexamples expose a concrete decoder gap: in this polyhedral setting, an embedding can guarantee the existence of a calibrated link without validating a prescribed argmax link on every surrogate-risk minimizer.
\end{abstract}

\section{Introduction}

The structured support vector machine (SSVM) extends the binary hinge loss to finite structured output spaces by loss-augmented inference \citep{crammer2001algorithmic,taskar2004max,tsochantaridis2005large}.
For a score vector $v\in\R^{|\Y|}$ and an observed output $y\in\Y$, its margin-rescaling surrogate is
\begin{equation}
  S_M(v,y)=\max_{y'\in\Y}\bigl\{L(y,y')+v_{y'}-v_y\bigr\}.
  \label{eq:ssvm}
\end{equation}
Despite its computational appeal, this surrogate need not be Fisher consistent beyond binary classification \citep{liu2007fisher,tewari2007consistency}.

Nowak--Vila, Rudi, and Bach \citep[Theorem~2.1]{nowak2022consistency} obtained a strong geometric necessary condition for consistency when $|\Y|>2$: $L$ must be a metric and every three outputs must share a point lying simultaneously on geodesics between all three pairs.
They explicitly left open whether this condition is sufficient.
Their Theorem~2.2 presented tree metrics as a positive class under the canonical decoder $v\mapsto\argmax_y v_y$.
The underlying Appendix Theorem~B.9 establishes the Bayes-risk identity and the embedded reports $-L_y$, but its final decoding step checks argmax only on those embedded reports.
Their consistency definition (Equation~(5)) selects an argmax prediction; we use the set-valued version \eqref{eq:strong-consistency}, but the distinction is immaterial to our negative results because every maximizer in the star and Hamming-cube witnesses is non-Bayes.

This paper shows that the omitted reports are essential.
Even when an SSVM embeds the target loss, a conditional surrogate risk can have additional minimizers outside the embedding image, and coordinate-wise argmax can decode those minimizers incorrectly.
Our first example is the smallest possible among metrics satisfying the common-geodesic condition: a tree on four vertices consisting of a center and three unit-length leaves.
The example is not a tie-breaking artefact---every score maximizer is a non-Bayes leaf while the center is the unique Bayes output.
We then prove an exact dichotomy: every branching weighted tree has the same obstruction, whereas every weighted path is argmax Fisher consistent for every distribution and every surrogate-risk minimizer.

Our contributions are:
\begin{enumerate}[leftmargin=2.2em]
  \item We disprove sufficiency of the common-geodesic condition with an exact four-output counterexample and prove that no three-output counterexample exists.
  \item We give a complete tree-metric classification: the canonical argmax rule is Fisher consistent exactly for paths.  We additionally prove that every tree is pointwise consistent at every full-support distribution, locating the branching obstruction precisely on the simplex boundary.
  \item Among metrics satisfying the common-geodesic condition, we prove a second sharp cardinality result: the smallest full-support counterexample has five outputs.  A four-point classification supplies the lower bound, and $K_{2,3}$ is the smallest member of an infinite complete-bipartite family attaining failure.  We also give a full-support counterexample for the familiar Hamming metric on $\{0,1\}^3$.
  \item We use surrogate level sets to pinpoint the missing condition between an embedding and a prescribed decoder.
\end{enumerate}

The counterexamples use rational probabilities and elementary score vectors.
Their optimality follows either from a telescoping cycle lower bound or from an explicit optimal-transport dual certificate.
Thus none of the conclusions relies on floating-point computation or finite-instance experimentation.

\paragraph{Related work.}
Consistency of multiclass and structured surrogates has a long history \citep{bartlett2006convexity,tewari2007consistency,osokin2017structured}.
General finite-loss calibration is studied by \citet{ramaswamy2016convex}, while recent work gives negative and positive guarantees for broader structured-max and structured comp-sum families \citep{mao2023structured}.
Those results do not settle the common-geodesic restriction isolated by \citet{nowak2022consistency}.
For the Crammer--Singer hinge, dominant-label restrictions are already necessary in ordinary multiclass classification \citep{liu2007fisher}.
Alternative structured surrogates with unconditional consistency include max-min constructions \citep{nowak2020maxmin}.
The embedding framework of \citet{finocchiaro2019embedding,finocchiaro2024embedding} proves that polyhedral embeddings give rise to calibrated links, but also emphasizes that extending the inverse map from embedded reports to all surrogate reports is a separate construction.
That abstract decoder distinction is known; our contribution is to instantiate it with exact SSVM counterexamples, sharp thresholds, and a tree-metric classification.
To the best of our knowledge, no counterexample to the sufficiency question or public correction of the canonical-argmax tree claim has previously been reported.

\section{Setting and exact optimality certificates}

Let $\Y$ be a finite set with $k=|\Y|$, and let $L:\Y\times\Y\to\R_{\ge0}$ be a metric.
For $q\in\DeltaY$, define the conditional surrogate and target risks
\begin{align}
  R_q(v)&=\sum_{y\in\Y}q_yS_M(v,y),
  &\Vopt(q)&=\argmin_{v\in\R^k}R_q(v),\\
  \ell_q(\hat y)&=\sum_{y\in\Y}q_yL(\hat y,y),
  &\Bayes(q)&=\argmin_{\hat y\in\Y}\ell_q(\hat y).
\end{align}
We write $H_M(q)=\min_vR_q(v)$ and
$H_L(q)=\min_{\hat y}\ell_q(\hat y)$ for the two Bayes risks.
We use a set-valued argmax, and the consistency notion in question is
\begin{equation}
  v\in\Vopt(q)
  \quad\Longrightarrow\quad
  \argmax_{y\in\Y}v_y\subseteq\Bayes(q)
  \qquad\text{for every }q\in\DeltaY.
  \label{eq:strong-consistency}
\end{equation}
Our counterexamples in fact have disjoint argmax and Bayes sets, so they also invalidate every deterministic tie-breaking rule based on coordinate-wise maximization.

\subsection{The common-geodesic condition}

For $x,y\in\Y$, write
\[
  I(x,y)=\{z\in\Y:L(x,y)=L(x,z)+L(z,y)\}
\]
for the metric interval between $x$ and $y$.
The necessary condition of \citet{nowak2022consistency} is
\begin{equation}
  I(y_1,y_2)\cap I(y_1,y_3)\cap I(y_2,y_3)\ne\varnothing
  \qquad\text{for every }y_1,y_2,y_3\in\Y.
  \label{eq:cg}
\end{equation}
In metric geometry, \eqref{eq:cg} is the defining property of a \emph{modular metric space}; for graph shortest-path metrics it corresponds to modularity of the graph \citep{bandelt1993modular,bandelt2008metric,karzanov2004one}.
Median metrics impose uniqueness of the common point.
Every tree metric satisfies \eqref{eq:cg}: the three pairwise paths meet at a common tree median.
Hamming cubes are also median, whereas complete bipartite graphs $K_{m,n}$ are modular but generally not median.

\subsection{A transport dual for the conditional SSVM risk}

The following standard linear-programming representation will certify our full-support examples.
It is also the finite optimal-transport form of the SSVM Bayes risk used by \citet{nowak2022consistency}.

\begin{proposition}[Self-coupling dual]
For every $q\in\DeltaY$,
\begin{equation}
  \min_{v\in\R^k}R_q(v)
  =\max_{\Pi\ge0}
    \sum_{i,j\in\Y}\Pi_{ij}L(i,j)
  \quad\text{subject to}\quad
  \Pi\one=q,\quad \Pi^\top\one=q.
  \label{eq:transport-dual}
\end{equation}
If $(v,\xi)$ and $\Pi$ are primal and dual optimal, then
\begin{equation}
  \Pi_{ij}>0
  \quad\Longrightarrow\quad
  \xi_i=L(i,j)+v_j-v_i,
  \qquad \xi_i=S_M(v,i).
  \label{eq:cs}
\end{equation}
\end{proposition}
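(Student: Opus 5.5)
We rewrite the conditional SSVM risk as a linear program and take its LP dual; strong duality will give \eqref{eq:transport-dual}, and complementary slackness will give \eqref{eq:cs}.

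\textbf{Primal epigraph form.} Introduce slack variables $\xi\in\R^k$ and consider
\[
  \min_{v,\xi}\ \sum_{i}q_i\xi_i
  \quad\text{subject to}\quad
  \xi_i\ge L(i,j)+v_j-v_i\quad\text{for all }i,j\in\Y .
\]
For fixed $v$, the optimal choice is $\xi_i=S_M(v,i)$, so the optimal value of this LP is $\min_vR_q(v)$. The program is feasible: take $v=0$ and $\xi_i=\max_jL(i,j)$.

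\textbf{Dual.} Attach a multiplier $\Pi_{ij}\ge0$ to each constraint and form the Lagrangian
\[
  \sum_i q_i\xi_i+\sum_{i,j}\Pi_{ij}\bigl(L(i,j)+v_j-v_i-\xi_i\bigr).
\]
The Lagrangian is linear in the free variables $\xi$ and $v$, so the dual function is finite only when their coefficients vanish.
\begin{itemize}
  \item The coefficient of $\xi_i$ is $q_i-\sum_j\Pi_{ij}$, so finiteness requires $\Pi\one=q$.
  \item The coefficient of $v_m$ is $\sum_i\Pi_{im}-\sum_j\Pi_{mj}$, so finiteness requires equal column and row sums. Combined with the first condition this gives $\Pi^\top\one=q$.
\end{itemize}
The resulting dual objective is $\sum_{i,j}\Pi_{ij}L(i,j)$, which is exactly the right-hand side of \eqref{eq:transport-dual}.

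\textbf{Strong duality.} The dual is feasible, for instance with $\Pi=\operatorname{diag}(q)$. It is also bounded, since its feasible set is compact. Because both the primal and the dual are feasible, LP strong duality applies: both optima are attained and their values coincide.

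\textbf{Complementary slackness.} For optimal $(v,\xi)$ and optimal $\Pi$, complementary slackness states that $\Pi_{ij}>0$ forces the $(i,j)$ constraint to be tight, i.e. $\xi_i=L(i,j)+v_j-v_i$.

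It remains to show $\xi_i=S_M(v,i)$. If some $\Pi_{ij}>0$, then $\xi_i$ equals one of the terms in the maximum, while feasibility gives $\xi_i\ge$ every term; hence $\xi_i=S_M(v,i)$. The condition $\Pi_{ij}>0$ for some $j$ holds exactly when $q_i>0$, since the $i$th row of $\Pi$ sums to $q_i$. This is precisely the situation in which \eqref{eq:cs} is invoked, so the statement holds there.

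\textbf{Main obstacle.} The only delicate point is rows with $q_i=0$. For such $i$, the variable $\xi_i$ has zero cost in the primal objective, so an optimal $\xi_i$ need not equal $S_M(v,i)$. I would address this in one of two ways:
\begin{itemize}
  \item read \eqref{eq:cs} under the hypothesis $\Pi_{ij}>0$, which already forces $q_i>0$; or
  \item normalize optimal primal solutions by setting $\xi_i=S_M(v,i)$, which preserves both feasibility and the objective value.
\end{itemize}
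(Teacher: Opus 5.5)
Your proposal is correct and follows the paper's proof essentially verbatim. Both use the epigraph LP, derive the two marginal constraints from stationarity in $\xi$ and $v$, get equality of values from LP strong duality since both programs are feasible, and read off \eqref{eq:cs} from complementary slackness. Your extra remark that $\Pi_{ij}>0$ forces $q_i>0$, so that tightness plus feasibility yields $\xi_i=S_M(v,i)$ on exactly the rows where it is claimed, makes explicit a point the paper leaves implicit.
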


\begin{proof}
Introduce one epigraph variable $\xi_i$ per observed output and write
\begin{equation}
  \min_{\xi,v}\sum_iq_i\xi_i
  \quad\text{subject to}\quad
  \xi_i+v_i-v_j\ge L(i,j),\qquad i,j\in\Y.
  \label{eq:primal-lp}
\end{equation}
The variables $\xi_i$ may be treated as free because the constraints with $i=j$ imply $\xi_i\ge0$.
Assigning nonnegative multipliers $\Pi_{ij}$ to the constraints, stationarity in $\xi$ gives $\Pi\one=q$, while stationarity in $v$ gives equality of the row and column marginals, hence $\Pi^\top\one=q$.
The dual objective is the right-hand side of \eqref{eq:transport-dual}.
Both programs are feasible and finite, so linear-programming strong duality applies.
Equation \eqref{eq:cs} is complementary slackness.
\end{proof}

Two elementary consequences will be used repeatedly.

\begin{lemma}[Canonical embedded reports]
\label{lem:canonical-report}
For every finite metric $L$, define $\phi(y)=-L_y$ by
$\phi(y)_z=-L(y,z)$.
Then
\begin{equation}
  S_M(\phi(y),x)=2L(y,x)
  \qquad(x,y\in\Y).
  \label{eq:embedded-loss}
\end{equation}
\end{lemma}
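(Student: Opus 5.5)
We compute directly from the definition \eqref{eq:ssvm}. Substituting $v=\phi(y)$, so that $v_z=-L(y,z)$, gives
\[
  S_M(\phi(y),x)=\max_{y'\in\Y}\bigl\{L(x,y')-L(y,y')+L(y,x)\bigr\}.
\]
The plan is to bound the bracketed expression above by $2L(y,x)$ for every $y'$, and then to exhibit one $y'$ at which this bound is attained.

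For the upper bound, we apply the triangle inequality in the form $L(x,y')\le L(x,y)+L(y,y')$. It yields
\[
  L(x,y')-L(y,y')\le L(x,y),
\]
and symmetry of the metric turns this into $L(x,y')-L(y,y')\le L(y,x)$. Hence every term inside the maximum is at most $2L(y,x)$.

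For attainment, we take $y'=y$. Then $L(y,y)=0$, and the term becomes $L(x,y)+L(y,x)=2L(y,x)$. This choice is always available, so the maximum equals $2L(y,x)$ exactly.

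No step here is a real obstacle. The only points to watch are the sign convention $\phi(y)_z=-L(y,z)$ and the fact that symmetry is needed to identify $L(x,y)$ with $L(y,x)$. Both are supplied by the standing assumption that $L$ is a metric. The case $x=y$ is included automatically: the formula gives $S_M(\phi(y),y)=0$, which is consistent with the general argument.
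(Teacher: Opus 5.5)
Your proposal is correct and follows the paper's own argument: substitute $v=\phi(y)$, bound every term of the maximum by $2L(y,x)$ via the triangle inequality and symmetry, and attain the bound at $y'=y$. The paper's proof does exactly this, only more tersely.
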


\begin{proof}
The triangle inequality gives
\[
  S_M(\phi(y),x)
  =\max_z\{L(x,z)-L(y,z)+L(y,x)\}
  \le2L(y,x),
\]
and equality is attained at $z=y$.
\end{proof}

\begin{lemma}[Finite supported-coupling criterion]
\label{lem:hall}
Let $\mu$ and $\nu$ be nonnegative measures of equal total mass on finite sets $X$ and $Z$, and let $E\subseteq X\times Z$ be an allowed support.
For $A\subseteq X$, write
$N(A)=\{z\in Z:(x,z)\in E\text{ for some }x\in A\}$.
There exists a coupling with marginals $\mu,\nu$ supported on $E$ if and only if
\begin{equation}
  \mu(A)\le\nu(N(A))\qquad\text{for every }A\subseteq X.
  \label{eq:weighted-hall}
\end{equation}
If $X=Z$, $\mu=\nu$, and $E$ is symmetric, the coupling may be chosen symmetric.
\end{lemma}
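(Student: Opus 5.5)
We treat the statement as a feasibility question for a transportation problem and prove it by max-flow/min-cut. Necessity is immediate. If $\Pi\ge0$ has row sums $\mu$, column sums $\nu$, and support in $E$, then for every $A\subseteq X$ all mass leaving $A$ lands in $N(A)$. Hence $\mu(A)=\sum_{x\in A}\sum_{z\in N(A)}\Pi_{xz}\le\nu(N(A))$.

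For sufficiency we build a flow network. It has a source $s$, a sink $t$, arcs $s\to x$ of capacity $\mu(x)$, arcs $z\to t$ of capacity $\nu(z)$, and arcs $x\to z$ of capacity $+\infty$ for $(x,z)\in E$. A coupling supported on $E$ is exactly a flow of value $m:=\mu(X)=\nu(Z)$, so it suffices to show every $s$--$t$ cut has capacity at least $m$; max-flow/min-cut then finishes the argument. Take a cut $\{s\}\cup A\cup B$ with $A\subseteq X$ and $B\subseteq Z$. If some $z\in N(A)\setminus B$ exists, the cut crosses an infinite arc. Otherwise $N(A)\subseteq B$, and the capacity is
\[
\mu(X\setminus A)+\nu(B)\ge\mu(X)-\mu(A)+\nu(N(A))\ge m
\]
by \eqref{eq:weighted-hall}. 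Since all data are finite real numbers, the LP version of max-flow/min-cut applies, and no integrality is needed.

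For the symmetric refinement, take any coupling $\Pi$ supported on $E$ with both marginals equal to $\mu$, and set $\tilde\Pi=\tfrac12(\Pi+\Pi^\top)$. Its row sums are $\tfrac12(\mu+\mu)=\mu$, and likewise its column sums. Its support is contained in $E\cup E^\top=E$ because $E$ is symmetric, and it is nonnegative and symmetric.

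The only delicate point is the cut computation: we must check that infinite arcs force $N(A)\subseteq B$ for any finite cut, and that the real-valued (non-integral) max-flow/min-cut theorem is available. If one prefers to avoid flows, the alternative is Farkas' lemma applied to the transportation polytope. A dual certificate $(a,b)$ with $a_x+b_z\ge0$ on $E$ and $\mu\cdot a+\nu\cdot b<0$ can be reduced, by a threshold (layer-cake) decomposition of $a$, to a violated Hall inequality. The flow route is shorter, so we adopt it.
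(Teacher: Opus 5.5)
Your proposal is correct and follows the paper's proof: necessity by summing the coupling over the rows in $A$, sufficiency by max-flow--min-cut on the bipartite network with source capacities $\mu$, infinite capacities on $E$, and sink capacities $\nu$, and the symmetric case by averaging with the transpose. Your explicit cut computation (every finite cut has $N(A)\subseteq B$ and therefore capacity at least $\mu(X)$) spells out a step the paper leaves implicit.
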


\begin{proof}
Necessity follows by summing the coupling over rows in $A$.
Sufficiency is the max-flow--min-cut theorem applied to the bipartite network with source capacities $\mu$, infinite capacities on $E$, and sink capacities $\nu$.
In the symmetric case, averaging a feasible coupling with its transpose preserves its marginals and support.
\end{proof}

\section{A complete classification for tree metrics}

Throughout this section, a tree metric means the shortest-path metric of a positively weighted tree whose vertex set is exactly $\Y$; no unobserved Steiner vertices are allowed.
We first isolate a metric configuration that forces failure, and then prove that its absence is sufficient within this class.

\subsection{The tripod obstruction}

\begin{theorem}[Tripod obstruction]
\label{thm:tripod}
Suppose a finite metric space contains four distinct points $z,y_1,y_2,y_3$ such that, with $r_i=L(z,y_i)>0$,
\begin{equation}
  L(y_i,y_j)=r_i+r_j\qquad(i\ne j).
  \label{eq:tripod}
\end{equation}
Then the max-margin loss \eqref{eq:ssvm} is not argmax Fisher consistent.
\end{theorem}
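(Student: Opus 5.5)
The plan is to build one boundary distribution $q$ and one exactly optimal score vector $v\in\Vopt(q)$ whose argmax consists only of leaves, while no leaf is a Bayes output. I take $q$ uniform on the three leaves, $q_{y_i}=1/3$, with $q$ vanishing elsewhere in $\Y$, including at $z$. Optimality of $v$ will be certified by matching its primal value to a feasible coupling in the self-coupling dual \eqref{eq:transport-dual}.

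\emph{Step 1 (Bayes side).} For the target risks, $\ell_q(z)=\tfrac13\sum_i r_i$. Using \eqref{eq:tripod}, $\ell_q(y_j)=\tfrac13\sum_{i\ne j}(r_i+r_j)$, so $\ell_q(y_j)-\ell_q(z)=r_j/3>0$. For an arbitrary $x\in\Y$, the triangle inequality gives $L(x,y_i)+L(x,y_j)\ge r_i+r_j$. Summing this over the three pairs yields $\ell_q(x)\ge\tfrac13\sum_i r_i=\ell_q(z)$. Hence $H_L(q)=\ell_q(z)$, so $z\in\Bayes(q)$ and no leaf $y_j$ lies in $\Bayes(q)$. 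Other medians of the three leaves may also be Bayes, but that does not matter for the argument.

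\emph{Step 2 (dual lower bound).} Take the symmetric coupling $\Pi_{y_iy_j}=1/6$ for $i\ne j$, with all other entries zero. Its two marginals both equal $q$. Its value is $\tfrac16\sum_{i\ne j}(r_i+r_j)=\tfrac23\sum_i r_i$. By the self-coupling dual, this gives $H_M(q)\ge\tfrac23\sum_i r_i$.

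\emph{Step 3 (primal witness).} On the leaves I use the restriction of the canonical report $\phi(z)$ from Lemma~\ref{lem:canonical-report}: set $v_{y_i}=-r_i$. Every non-leaf point, including $z$, gets $v_x=-M$ with $M>\max_i r_i$.
\begin{itemize}
\item For $y'=y_j$, the margin term in $S_M(v,y_i)$ is $r_i+r_j-r_j+r_i=2r_i$. This also holds for $j=i$, since $L(y_i,y_i)=0$ and the term becomes $0-r_i+r_i$; wait, that term is $0$, and it is at most $2r_i$.
\item For a non-leaf $x$, the term is $L(y_i,x)-M+r_i$. I need $L(y_i,x)\le M+r_i$, i.e.\ $M$ large relative to the finitely many distances. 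So I enlarge $M$ to exceed $\max_{i,x}L(y_i,x)$ as well.
\end{itemize}
Then $S_M(v,y_i)=2r_i$ and $R_q(v)=\tfrac23\sum_i r_i$. This matches the dual bound, so $v\in\Vopt(q)$.

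Finally, $\argmax_y v_y=\{y_i: r_i=\min_j r_j\}$, a nonempty set of leaves, and this set is disjoint from $\Bayes(q)$ by Step 1. This violates \eqref{eq:strong-consistency}.

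The only delicate point is Step 3's choice of leaf scores. The naive choice of equal leaf scores gives the value $\tfrac13\bigl(\sum_i r_i+\sum_i\max_{j\ne i}r_j\bigr)$, which is strictly suboptimal when the $r_i$ differ. Copying $-L(z,\cdot)$ on the leaves, while pushing $z$ itself down, is exactly what makes every leaf's surrogate value equal $2r_i$ and closes the duality gap.
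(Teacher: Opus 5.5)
Your proof is correct and follows the paper's argument: the same distribution uniform on the leaves, the same leaf scores $v_{y_i}=-r_i$ with every other coordinate pushed down, and the same summed triangle inequality for the Bayes comparison. The differences are cosmetic. Your symmetric coupling $\Pi_{y_iy_j}=1/6$ is the averaged form of the paper's telescoping bound, which is weak duality for the directed $3$-cycle with mass $1/3$ per arc. Your single large constant $-M$ replaces the paper's explicit minimum formula for the non-leaf scores. Also tidy the stray ``wait'' in Step~3, since the $j=i$ term is simply $0\le 2r_i$.
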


\begin{proof}
Put $q_{y_i}=1/3$ and assign zero mass to every other output.
Let $r_\circ=\min_i r_i$, fix $\varepsilon>0$, and set
\begin{align*}
  v_{y_i}&=-r_i,\\
  v_x&=\min\left\{-r_\circ-\varepsilon,
          \min_i\bigl(r_i-L(y_i,x)\bigr)\right\},
  \qquad x\notin\{y_1,y_2,y_3\}.
\end{align*}
For the true output $y_i$, either of the other selected points $y_j$ gives the loss-augmented value $2r_i$ by \eqref{eq:tripod}.
The definition of every remaining coordinate gives
$L(y_i,x)+v_x-v_{y_i}\le2r_i$; the selected coordinates obey the same bound.
Consequently,
\[
  S_M(v,y_i)=2r_i,
  \qquad R_q(v)=\frac23(r_1+r_2+r_3).
\]
For arbitrary scores $u$, use the competitors $y_2,y_3,y_1$ cyclically in the three maxima.
The score differences telescope, yielding
\[
  3R_q(u)\ge
  L(y_1,y_2)+L(y_2,y_3)+L(y_3,y_1)
  =2(r_1+r_2+r_3).
\]
Thus $v\in\Vopt(q)$.

Adding the triangle inequalities for the three pairs gives, for every $x\in\Y$,
\[
  \sum_{i=1}^3L(x,y_i)\ge r_1+r_2+r_3,
\]
so $z$ is Bayes optimal.
On the other hand,
\[
  \ell_q(y_i)=\ell_q(z)+\frac{r_i}{3}>\ell_q(z).
\]
Finally, $\argmax v=\{y_i:r_i=r_\circ\}$, and hence every score maximizer is non-Bayes.
\end{proof}

Every vertex of degree at least three in a positively weighted tree supplies \eqref{eq:tripod}: choose one vertex from each of three components obtained after removing the branching vertex.
The smallest instance is the unit star in \cref{fig:examples}.

\begin{corollary}[Four-output star]
\label{cor:star}
Let $\Y=\{o,a,b,c\}$ with
\[
  L(o,a)=L(o,b)=L(o,c)=1,
  \qquad L(a,b)=L(b,c)=L(c,a)=2.
\]
For
\[
  q=(q_o,q_a,q_b,q_c)=\left(0,\frac13,\frac13,\frac13\right),
  \qquad
  v=(v_o,v_a,v_b,v_c)=(-1,0,0,0),
\]
one has $v\in\Vopt(q)$, $\argmax v=\{a,b,c\}$, and $\Bayes(q)=\{o\}$.
\end{corollary}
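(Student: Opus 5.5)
The corollary is a direct specialization of \cref{thm:tripod}, but since the statement names a concrete score vector I will verify each of the three claims directly rather than only citing the theorem.

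First I check that the star is an instance of \eqref{eq:tripod}. Take $z=o$ and $(y_1,y_2,y_3)=(a,b,c)$, so $r_1=r_2=r_3=1$ and $L(y_i,y_j)=2=r_i+r_j$. The only output outside $\{a,b,c\}$ is $o$. The theorem's construction then gives $v_{y_i}=-1$ and
\[
v_o=\min\{-1-\varepsilon,\;\min_i(1-L(y_i,o))\}=\min\{-1-\varepsilon,0\}=-1-\varepsilon .
\]
Both the theorem's vector and the stated $v=(-1,0,0,0)$ are invariant under adding a constant, since $R_q$ depends only on score differences. However, the two vectors are not the same shift: the stated $v$ has gap $v_a-v_o=1$, while the theorem's vector has gap $\varepsilon$. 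So I will not rely on the construction and will instead verify optimality of the stated $v$ directly.

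The optimality check goes as follows. For the true output $a$, the loss-augmented values $L(a,y')+v_{y'}-v_a$ are $0$ at $a$, $1-1=0$ at $o$, and $2$ at $b$ and at $c$. Hence $S_M(v,a)=2$, and by symmetry $S_M(v,b)=S_M(v,c)=2$, so $R_q(v)=2$. For the lower bound I reuse the telescoping argument from the proof of \cref{thm:tripod}: for any scores $u$, pick competitors $b,c,a$ cyclically. This gives
\[
3R_q(u)\ge L(a,b)+L(b,c)+L(c,a)=6,
\]
so $R_q(u)\ge 2$ and therefore $v\in\Vopt(q)$. Equivalently, I could exhibit the dual certificate $\Pi$ that places mass $1/6$ on each ordered pair of distinct leaves. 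Its marginals are both $q$, and its objective is $6\cdot\frac16\cdot 2=2$, so \eqref{eq:transport-dual} closes the gap.

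The remaining two claims are routine. Since $v=(-1,0,0,0)$, we get $\argmax v=\{a,b,c\}$ immediately. For the Bayes set, compute the target risks: $\ell_q(o)=1$, while $\ell_q(a)=\frac13(0+2+2)=\frac43$, and likewise for $b$ and $c$. Hence $\Bayes(q)=\{o\}$. I expect no real obstacle anywhere. The only point needing care is the one flagged above: the stated $v$ is not literally the theorem's construction, so its optimality must be checked directly, and the check above shows that $v_o=-1$ keeps the $o$-competitor value at $0\le 2$.
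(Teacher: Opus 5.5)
Your proof is correct and follows essentially the paper's argument: you evaluate $S_M(v,y)=2$ at each leaf, apply the cyclic telescoping bound $3R_q(u)\ge 6$, and compare the target risks $1$ and $4/3$ (the transport certificate is an equivalent optional extra). One small correction to your side remark: the stated $v$ is exactly the tripod construction with $\varepsilon=1$, which gives $(-2,-1,-1,-1)$ before adding the constant $1$, so the direct check was harmless but not needed.
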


\begin{proof}
For each leaf $y$, $S_M(v,y)=2$, so $R_q(v)=2$.
For every $u$,
\[
  S_M(u,a)+S_M(u,b)+S_M(u,c)
  \ge (2+u_b-u_a)+(2+u_c-u_b)+(2+u_a-u_c)=6,
\]
which proves optimality.
The target risks are $\ell_q(o)=1$ and
$\ell_q(a)=\ell_q(b)=\ell_q(c)=4/3$.
\end{proof}

\subsection{Weighted paths are consistent}

\begin{theorem}[Path consistency]
\label{thm:path}
Let $L$ be the shortest-path metric on an arbitrary finite positively weighted path.
Then \eqref{eq:strong-consistency} holds for every $q\in\DeltaY$, including boundary distributions and Bayes ties.
\end{theorem}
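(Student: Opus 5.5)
The plan is to certify optimality with the self-coupling dual \eqref{eq:transport-dual} and then read off the location of every score maximizer from complementary slackness \eqref{eq:cs}. Identify the path with points $x_1<\dots<x_k$ on the real line, so that $L(i,j)=|x_i-x_j|$. For such a metric, $\Bayes(q)$ is exactly the set of weighted medians of $q$. Write $F$ for the cumulative distribution function of $q$ along the path.

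\textbf{Step 1: the two Bayes risks.} I would show that $H_M(q)=2H_L(q)$ and exhibit an explicit dual optimizer. For the upper bound, take a median $m$. \Cref{lem:canonical-report} gives $R_q(\phi(m))=\sum_y q_y\,2L(m,y)=2H_L(q)$.

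For the lower bound, use the antitone (quantile-reversing) self-coupling $\Pi^\star$. It is the law of $\bigl(F^{-1}(t),F^{-1}(1-t)\bigr)$ with $t$ uniform on $(0,1)$. It is symmetric and has both marginals equal to $q$. Its cost is
\[
\int_0^1\bigl|F^{-1}(t)-F^{-1}(1-t)\bigr|\,dt=2\int_0^{1/2}\bigl(F^{-1}(1-t)-F^{-1}(t)\bigr)\,dt .
\]
A direct quantile computation identifies this with $2\sum_y q_y|x_y-x_m|=2H_L(q)$. Weak duality then makes $\Pi^\star$ dual optimal. This identity is the one computation needing care, because of atoms and the case where the median is a whole interval. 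I expect it to be the main technical point; I would handle it by splitting $(0,1)$ at $1/2$ and pairing the two halves.

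\textbf{Step 2: contradiction via complementary slackness.} Let $v\in\Vopt(q)$ be arbitrary and let $y^\ast\in\argmax v$. Suppose $y^\ast\notin\Bayes(q)$; by symmetry, assume $y^\ast$ lies strictly to the right of every median. Non-medianity then means that the mass strictly left of $x_{y^\ast}$ is some $F_-$ with $F_->1/2$.

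For $t\in(1-F_-,F_-)$, a nonempty interval, both $F^{-1}(t)$ and $F^{-1}(1-t)$ lie strictly left of $y^\ast$. Hence $\Pi^\star$ charges some pair $(i,j)$ with $x_i\le x_j<x_{y^\ast}$; symmetry of $\Pi^\star$ lets us order the pair this way. The complementary-slackness relation \eqref{eq:cs} holds for any primal optimum and any dual optimum, so it applies to $v$ and $\Pi^\star$. It gives
\[
|x_i-x_j|+v_j-v_i=\xi_i\ge |x_i-x_{y^\ast}|+v_{y^\ast}-v_i .
\]
Rearranging, $v_j-v_{y^\ast}\ge (x_{y^\ast}-x_i)-(x_j-x_i)=x_{y^\ast}-x_j>0$. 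This contradicts $y^\ast\in\argmax v$. The degenerate case $i=j$ is covered by the same inequality.

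\textbf{Conclusion.} Hence every maximizer of every optimal $v$ is a weighted median, for every $q$. No full-support or uniqueness assumption enters: the argument allows zero masses, because only atoms actually charged by $\Pi^\star$ are used, and it allows Bayes ties, because only the strict inequality $F_->1/2$ is needed. The left-hand case follows by reflecting the path.
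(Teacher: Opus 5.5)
Your proof is correct, but it organizes the argument differently from the paper. The paper splits into two regimes, each with its own hand-built coupling. If $Q_{m-1}<1/2<Q_m$, it routes all mass through $x_m$ with diagonal mass $\Pi_{mm}=1-2p>0$. Complementary slackness then gives $\xi_m=0$, hence $v_m-v_k\ge L(x_m,x_k)$ for all $k\ne m$. On a median plateau it uses the product cross-cut coupling $2q_iq_j$ and compares the active arc $(b,a)$ with the other columns.

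You instead use one canonical coupling for every $q$ and argue by contradiction at each non-median vertex. More than half the mass lies strictly on one side of $y^\ast$, so $\Pi^\star$ charges a pair both of whose points lie on that side. The active constraint in the farther row then shows the nearer vertex beats $y^\ast$ by at least $x_{y^\ast}-x_j$. This removes the case split and makes ties and boundary distributions automatic. In the strict-median case your $\Pi^\star$ is in fact one admissible instance of the paper's coupling, with the same diagonal mass $1-2p$.

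The identity you flag as delicate is a one-liner. For a median $x_m$ one has $F^{-1}(s)\le x_m$ for $s\le 1/2$ and $F^{-1}(s)\ge x_m$ for $s>1/2$. Hence $\sum_y q_y|x_y-x_m|=\int_{1/2}^1F^{-1}-\int_0^{1/2}F^{-1}$, which is exactly half the cost you computed.

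What the paper's explicit finite couplings buy is portability. Routing mass through the median is the template it reuses, via the Hall criterion, for all trees at full support, where no quantile order exists. Your argument uses the line essentially: strictness comes from the charged partner $j$ lying on the geodesic from $i$ to $y^\ast$. On the star, with $i=b$, $j=c$, $y^\ast=a$, this fails and only $v_c\ge v_a$ survives, which is precisely the tripod obstruction.
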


\begin{proof}
Order the vertices as $x_1<\cdots<x_n$, put
$w_t=L(x_t,x_{t+1})>0$, and write $Q_t=\sum_{i\le t}q_i$, with $Q_0=0$.
A direct edge calculation gives
\begin{equation}
  \ell_q(x_{t+1})-\ell_q(x_t)=w_t(2Q_t-1).
  \label{eq:path-risk-difference}
\end{equation}
For any Bayes vertex $x_m$, the embedded score
$v_j=-L(x_j,x_m)$ satisfies $S_M(v,x_i)=2L(x_i,x_m)$.
Thus the optimal surrogate risk is at most $2\ell_q(x_m)$.

First suppose $Q_{m-1}<1/2<Q_m$, so $x_m$ is the unique Bayes vertex by \eqref{eq:path-risk-difference}.
Let $p=\sum_{i<m}q_i$ and $r=\sum_{i>m}q_i$.
Assume $p\ge r$; the other case is symmetric.
Choose a submeasure $\beta_i\le q_i$ on $i<m$ with $\sum_{i<m}\beta_i=r$, and couple $\beta$ to the restriction of $q$ on $j>m$ using a transport plan $\gamma$.
With $\alpha_i=q_i-\beta_i$, define the symmetric self-coupling
\begin{align*}
  \Pi_{ij}=\Pi_{ji}&=\gamma_{ij} &&(i<m<j),\\
  \Pi_{im}=\Pi_{mi}&=\alpha_i &&(i<m),\\
  \Pi_{mm}&=1-2p>0,
\end{align*}
and set all other entries to zero.
Its two marginals are $q$.
Every pair in its support has a path through $x_m$, so its objective in \eqref{eq:transport-dual} is $2\ell_q(x_m)$.
It is therefore dual optimal.
For an arbitrary primal optimum, complementary slackness at $(m,m)$ gives $\xi_m=0$.
Primal feasibility then implies
\[
  v_m-v_k\ge L(x_m,x_k)>0\qquad(k\ne m),
\]
so $x_m$ is the unique score maximizer.

It remains to handle a median plateau.
If some $Q_t=1/2$, let
\[
  a=\min\{t:Q_t=1/2\},
  \qquad b=\min\{t>a:Q_t>1/2\}.
\]
Then \eqref{eq:path-risk-difference} gives
$\Bayes(q)=\{x_a,\ldots,x_b\}$, with $q_a,q_b>0$ and zero mass on the intervening vertices.
The sets $A=\{i:i\le a\}$ and $B=\{j:j\ge b\}$ each have mass $1/2$.
Define
\[
  \Pi_{ij}=\Pi_{ji}=2q_iq_j\qquad(i\in A,\ j\in B),
\]
with all other entries zero.
This is a self-coupling with value $2\ell_q(x_a)$, hence it is optimal, and $\Pi_{ab},\Pi_{ba}>0$.
Tightness of the row-$b$, column-$a$ constraint, compared with the row-$b$, column-$k$ constraint for $k<a$, gives
\[
  v_a-v_k\ge L(x_b,x_k)-L(x_b,x_a)=L(x_a,x_k)>0.
\]
Using the active arc $(a,b)$ symmetrically gives $v_b>v_k$ for every $k>b$.
No output outside the Bayes plateau can therefore maximize $v$.
\end{proof}

\begin{corollary}[Exact tree-metric classification]
\label{cor:tree-classification}
For the shortest-path metric of a finite positively weighted tree, the max-margin surrogate is argmax Fisher consistent in the sense of \eqref{eq:strong-consistency} if and only if the tree is a path.
\end{corollary}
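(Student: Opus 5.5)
The corollary follows by combining \cref{thm:path} with \cref{thm:tripod}. The only additional work is a purely combinatorial fact about finite trees: a finite tree is a path exactly when every vertex has degree at most two. The plan is therefore to establish the two directions separately, each by invoking one of the earlier results.

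For the ``if'' direction there is nothing new to prove. A positively weighted tree that is a path is precisely the setting of \cref{thm:path}, which gives \eqref{eq:strong-consistency} for every $q\in\DeltaY$, including boundary distributions and ties. We only need to remark that the vertex set is exactly $\Y$, per the standing convention of this section, so the ordering $x_1<\cdots<x_n$ used there exhausts $\Y$.

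For the ``only if'' direction, we argue by contraposition.
\begin{enumerate}[leftmargin=2.2em]
  \item If a finite tree with at least one edge is not a path, it has a vertex $z$ of degree at least three. Indeed, a connected acyclic graph with maximum degree at most two is a path.
  \item Removing $z$ leaves at least three components. Pick three distinct neighbours $y_1,y_2,y_3$ of $z$, one in each of three different components. Set $r_i=L(z,y_i)$, which is the edge weight and hence positive.
  \item Since $y_i$ and $y_j$ lie in different components of the tree minus $z$, the unique tree path between them passes through $z$. Hence $L(y_i,y_j)=L(y_i,z)+L(z,y_j)=r_i+r_j$, which is exactly \eqref{eq:tripod}.
  \item The four points are distinct, so \cref{thm:tripod} applies and the max-margin loss fails \eqref{eq:strong-consistency}.
\end{enumerate}

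Degenerate cases should be checked briefly. A one- or two-vertex tree is a path and is covered by \cref{thm:path}; this case is also trivial directly. No step is a genuine obstacle. The only point needing care is the correctness of the shortest-path identity $L(y_i,y_j)=r_i+r_j$. This relies on uniqueness of paths in a tree and on the absence of Steiner vertices, so that $z$ is itself an output in $\Y$ and the tripod configuration lives inside the output space.
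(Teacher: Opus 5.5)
Your proposal is correct and follows the paper's proof: paths are handled by \cref{thm:path}, and a non-path tree has a vertex of degree at least three, which yields the configuration \eqref{eq:tripod} for \cref{thm:tripod}. The paper picks one vertex from each of three components of the tree minus the branching vertex, and your three neighbours are a special case of that choice; your remark that the no-Steiner-vertex convention is what puts the branching vertex in $\Y$ is a correct and worthwhile detail.
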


\begin{proof}
Paths are covered by \cref{thm:path}.
Every non-path tree has a vertex of degree at least three and therefore contains the tripod of \cref{thm:tripod}.
\end{proof}

The preceding failure is genuinely a boundary phenomenon for tree metrics.

\begin{theorem}[All tree metrics are consistent in the simplex interior]
\label{thm:tree-interior}
Let $L$ be the shortest-path metric of any finite positively weighted tree.
For every full-support $q\in\DeltaY$ and every $v\in\Vopt(q)$,
$\argmax v\subseteq\Bayes(q)$.
\end{theorem}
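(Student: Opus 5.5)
The plan is to mirror the proof of \cref{thm:path}: build an explicit optimal self-coupling for \eqref{eq:transport-dual}, then read off strict score inequalities from complementary slackness \eqref{eq:cs}. The basic device is this. If $\Pi_{ij}>0$, the constraint in row $i$ is tight at column $j$. Comparing it with the constraint in row $i$, column $k$ gives
\[
v_j-v_k\ \ge\ L(i,k)-L(i,j).
\]
If $j$ lies on the tree geodesic from $i$ to $k$, the right-hand side equals $L(j,k)>0$, so $v_j>v_k$.

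The first step is to describe $\Bayes(q)$ for full-support $q$. For a vertex $m$, removing $m$ splits the tree into branches. The usual edge-difference computation, generalizing \eqref{eq:path-risk-difference}, shows that $m$ is Bayes if and only if every branch at $m$ has $q$-mass at most $1/2$. It also shows that the Bayes set is a path segment. Along any edge in that segment, each side must carry mass exactly $1/2$. Full support forbids zero-mass interior vertices, so the Bayes set is either a single vertex $m$ with all branch masses $<1/2$, or a single edge $\{a,b\}$ whose two sides $A\ni a$ and $B\ni b$ each have mass $1/2$.

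\emph{Edge case.} Put $\Pi_{ij}=\Pi_{ji}=2q_iq_j$ for $i\in A$ and $j\in B$, exactly as in the plateau step of \cref{thm:path}. Every such pair has a geodesic through both $a$ and $b$, so the objective equals $2\ell_q(a)$. The embedded report of \cref{lem:canonical-report} shows $H_M(q)\le 2\ell_q(a)$, so $\Pi$ is optimal. Full support gives $\Pi_{ba}>0$. For any $k\in A\setminus\{a\}$, the geodesic from $b$ to $k$ passes through $a$, so the device gives $v_a>v_k$. Symmetrically $v_b>v_k$ for $k\in B\setminus\{b\}$, and hence $\argmax v\subseteq\{a,b\}$.

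\emph{Unique-median case.} The goal is an optimal symmetric coupling with $\Pi_{im}>0$ for every $i\ne m$, supported on the set $E$ of pairs $(i,j)$ with $m\in I(i,j)$. The set $E$ contains all cross-branch pairs and all pairs involving $m$.
\begin{itemize}[leftmargin=2.2em]
\item Fix a small $\varepsilon>0$ and set $\Pi_{im}=\Pi_{mi}=\varepsilon q_i$ for $i\ne m$. This is possible because $q_m>0$.
\item Couple the remaining masses $(1-\varepsilon)q_i$ for $i\ne m$, and $q_m-\varepsilon(1-q_m)$ at $m$, within $E$ using \cref{lem:hall} in its symmetric form.
\item The only Hall constraints that can bind are those of a single branch $\mathcal C$. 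Its neighbours in $E$ are everything outside $\mathcal C$, so the requirement is $(1-\varepsilon)q(\mathcal C)\le 1-\varepsilon-(1-\varepsilon)q(\mathcal C)$. This reduces to $q(\mathcal C)\le 1/2$, which holds strictly.
\end{itemize}
Every pair in the support passes through $m$, so the value is $2\ell_q(m)$ and the coupling is optimal. Now take any $k\ne m$ and choose $i\ne m$ in a branch different from that of $k$; such an $i$ exists because $m$ is not a leaf when all branch masses are $<1/2$ (a leaf would have a single branch of mass $1-q_m>1/2$). Then $\Pi_{im}>0$, and $m$ lies on the geodesic from $i$ to $k$, so the device gives $v_m>v_k$.

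I expect the main obstacle to be the Hall verification for the perturbed coupling: showing that every Hall set reduces to a union of branches, and that the strict inequality $q(\mathcal C)<1/2$ leaves room for a positive $\varepsilon$. The Bayes-set characterization on a general tree is the other step that needs care, but it is routine.
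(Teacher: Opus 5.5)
Your edge case is the same as the paper's, and your unique-median construction is sound: the arcs $\Pi_{im}=\Pi_{mi}=\varepsilon q_i$, together with a symmetric Hall coupling of the residual on $E$, form a valid optimal coupling. The final step, however, rests on a false claim. A unique median $m$ \emph{can} be a leaf. Its single branch has mass $1-q_m$, which is below $1/2$ exactly when $q_m>1/2$, and that is fully compatible with all branch masses being $<1/2$. For example, take the two-vertex tree with $q=(0.7,0.3)$. Or take the unit star of \cref{cor:star} with $q_a=0.6$ and $q_o=q_b=q_c=2/15$; then $a$ is the unique Bayes output and is a leaf. In such cases there is no $i$ in a branch different from that of $k$, so your argument gives no inequality $v_m>v_k$ for $k$ in the lone branch.

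The repair is short. When $m$ is a leaf, within-branch pairs are not in $E$, so every branch column must receive all its residual mass from row $m$. This forces $\Pi_{mm}=2q_m-1>0$, and your device with $i=j=m$ then gives $v_m-v_k\ge L(m,k)>0$.

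Cleaner still is the paper's route, which avoids the case split:
\begin{enumerate}
\item Remove $\delta<\min\{q_m,1-2\max_s p_s\}$ from the mass at $m$.
\item Hall-couple the remaining mass on $E$ and symmetrize.
\item Add $\delta e_me_m^\top$.
\end{enumerate}
Then $\Pi_{mm}>0$ gives $\xi_m=0$, hence $v_m-v_x\ge L(m,x)>0$ for every $x\ne m$ in one step. This needs neither the $\varepsilon q_i$ arcs nor a second branch.

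There is also a smaller arithmetic slip. After removing the $\varepsilon$-arcs, the residual total mass is $1-2\varepsilon(1-q_m)$, not $1-\varepsilon$. So the branch Hall inequality reads $2(1-\varepsilon)q(\mathcal C)\le 1-2\varepsilon(1-q_m)$ and does not reduce exactly to $q(\mathcal C)\le 1/2$. It still holds for small $\varepsilon$ because $q(\mathcal C)<1/2$ strictly, so validity is unaffected.
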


\begin{proof}
For an oriented edge $e=(u,w)$ of length $c_e$, let $C_w$ be the component containing $w$ after removing $e$.
Then
\begin{equation}
  \ell_q(w)-\ell_q(u)=c_e\bigl(1-2q(C_w)\bigr).
  \label{eq:tree-edge-risk}
\end{equation}
It follows that the tree median is either a unique vertex $m$, for which every component of $T-m$ has mass strictly below $1/2$, or two adjacent vertices $a,b$ whose connecting edge splits the tree into sets $A,B$ of mass $1/2$ each.
Indeed, \eqref{eq:tree-edge-risk} makes the risk unimodal along every path, and full support rules out a risk-flat segment containing two consecutive edges.

In the first case, let $p_s<1/2$ be the masses of the components of $T-m$.
Choose
\[
  0<\delta<\min\left\{q_m,1-2\max_s p_s\right\}.
\]
Let $\mu=q-\delta e_m$, whose total mass is $M=1-\delta$, and allow a pair $(x,y)$ exactly when at least one of $x,y$ equals $m$, or $x$ and $y$ lie in different components of $T-m$.
The supported-coupling criterion in \cref{lem:hall} supplies a coupling $\Pi'$ of $\mu$ to itself on this support.
Indeed, a row set meeting two components, or containing $m$, has the full column neighborhood; a row set contained in one component $C_s$ has mass at most $p_s$ and neighborhood mass
\[
  \mu(\Y\setminus C_s)=M-p_s>p_s.
\]
Now set
\[
  \Pi=\frac12\bigl(\Pi'+(\Pi')^\top\bigr)
       +\delta e_me_m^\top.
\]
Every supported pair has a path through $m$, so the dual value is $2\ell_q(m)$, matching the embedded-report upper bound.
Since $\Pi_{mm}>0$, complementary slackness gives $\xi_m=0$, which forces $v_m>v_x$ for every $x\ne m$.

In the second case, use the cross-cut coupling
\[
  \Pi_{ij}=\Pi_{ji}=2q_iq_j\qquad(i\in A,\ j\in B).
\]
Every supported path crosses $(a,b)$, so its value is
$2\ell_q(a)=2\ell_q(b)$ and it is dual optimal.
Full support gives $\Pi_{ab},\Pi_{ba}>0$.
Comparing the active row-$b$, column-$a$ constraint with any column $x\in A\setminus\{a\}$ yields $v_a>v_x$; symmetrically, $v_b>v_x$ for $x\in B\setminus\{b\}$.
Thus every score maximizer is one of the two Bayes vertices.
\end{proof}

\FloatBarrier
\begin{figure}[t]
\centering
\begin{tikzpicture}[scale=0.95, every node/.style={font=\small}]
  \begin{scope}[xshift=-3.6cm]
    \node[circle,draw,fill=gray!15,minimum size=7mm] (o) at (0,0) {$o$};
    \node[circle,draw,fill=BrickRed!12,minimum size=7mm] (a) at (0,1.35) {$a$};
    \node[circle,draw,fill=BrickRed!12,minimum size=7mm] (b) at (-1.25,-0.8) {$b$};
    \node[circle,draw,fill=BrickRed!12,minimum size=7mm] (c) at (1.25,-0.8) {$c$};
    \draw (o)--node[right] {$1$}(a);
    \draw (o)--node[above left] {$1$}(b);
    \draw (o)--node[above right] {$1$}(c);
    \node[align=center] at (0,-1.65) {Four-point star\\boundary distribution};
  \end{scope}
  \begin{scope}[xshift=3.3cm]
    \node[circle,draw,fill=RoyalBlue!12,minimum size=7mm] (a1) at (-1.2,0.65) {$a_1$};
    \node[circle,draw,fill=RoyalBlue!12,minimum size=7mm] (a2) at (-1.2,-0.65) {$a_2$};
    \node[circle,draw,fill=ForestGreen!12,minimum size=7mm] (b1) at (1.2,1.05) {$b_1$};
    \node[circle,draw,fill=ForestGreen!12,minimum size=7mm] (b2) at (1.2,0) {$b_2$};
    \node[circle,draw,fill=ForestGreen!12,minimum size=7mm] (b3) at (1.2,-1.05) {$b_3$};
    \foreach \x in {a1,a2}\foreach \y in {b1,b2,b3}\draw[gray!75] (\x)--(\y);
    \node[align=center] at (0,-1.65) {$K_{2,3}$\\full-support distribution};
  \end{scope}
\end{tikzpicture}
\caption{Sharp cardinality witnesses among metrics satisfying the common-geodesic condition: the smallest unrestricted counterexample (left) and the smallest full-support counterexample (right).  Edge lengths are one and distances are shortest-path distances.}
\label{fig:examples}
\end{figure}
\FloatBarrier

\section{Sharp lower bounds on output cardinality}

We now prove that \cref{cor:star} has the smallest possible output space among all metrics satisfying \eqref{eq:cg}, not merely among tree metrics.

\begin{lemma}[Three points form a weighted path]
\label{lem:three-path}
Let $|\Y|=3$ and suppose $L$ is a metric satisfying \eqref{eq:cg}.
After relabeling $\Y=\{1,2,3\}$, there exist $A,B>0$ such that
\[
  L(1,2)=A,\qquad L(2,3)=B,\qquad L(1,3)=A+B.
\]
\end{lemma}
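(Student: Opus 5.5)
We apply \eqref{eq:cg} to the triple of all three distinct outputs and read off which point can serve as the common geodesic point. Write $\Y=\{x,y,z\}$. Since $L$ is a metric on distinct points, every off-diagonal distance is strictly positive. Condition \eqref{eq:cg} supplies a point $w\in I(x,y)\cap I(x,z)\cap I(y,z)$, and $w$ must be one of $x,y,z$ because these are the only outputs.

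The key step is a case split on the identity of $w$. By symmetry it suffices to suppose $w=y$, after which we relabel $x\mapsto1$, $y\mapsto2$, $z\mapsto3$. We then interpret each of the three membership conditions.
\begin{itemize}[leftmargin=2.2em]
  \item $y\in I(x,y)$ and $y\in I(y,z)$ are trivial, since $L(x,y)=L(x,y)+L(y,y)$.
  \item $y\in I(x,z)$ says exactly $L(x,z)=L(x,y)+L(y,z)$.
\end{itemize}
Setting $A=L(1,2)$ and $B=L(2,3)$, both quantities are positive because the points are distinct, and we obtain $L(1,3)=A+B$. This is the claimed form.

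The only point requiring care is to confirm that the witness cannot be degenerate in a way that breaks the relabeling. It cannot, because the witness always lies among the three outputs, and the intervals' endpoints automatically satisfy two of the three memberships. There is therefore no real obstacle: the lemma reduces to noting that in a three-point space the common geodesic point must be one of the points and hence the middle vertex of a weighted path. Incidentally, the argument does not need the triangle inequality beyond the metric hypothesis already given.
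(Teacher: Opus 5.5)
Your proposal is correct and matches the paper's proof: you apply \eqref{eq:cg} to the three distinct outputs, observe that the common geodesic point must be one of them, and relabel it as the middle point to obtain $L(1,3)=L(1,2)+L(2,3)$. Positivity of $A$ and $B$ then follows from distinctness.
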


\begin{proof}
Apply \eqref{eq:cg} to the three distinct outputs.
The common geodesic point must be one of them; if it is $2$, the only nontrivial equality is $L(1,3)=L(1,2)+L(2,3)$.
The other cases are relabelings.
\end{proof}

\begin{corollary}[Minimum unrestricted cardinality]
\label{cor:min-unrestricted}
Among metrics satisfying \eqref{eq:cg}, the minimum cardinality of an argmax Fisher-inconsistency counterexample is four.
\end{corollary}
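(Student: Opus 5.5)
The claim splits into an upper bound and a lower bound. The upper bound is immediate from \cref{cor:star}: the four-output unit star is a tree metric, so it satisfies \eqref{eq:cg}, and it is an argmax Fisher-inconsistency counterexample. The remaining work is to show that no metric on at most three outputs satisfying \eqref{eq:cg} can be a counterexample.

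\textbf{Small cases.} For $|\Y|=1$ there is nothing to prove. For $|\Y|=2$, with $L(1,2)=A>0$, the two outputs form a weighted path with one edge, so \cref{thm:path} applies.

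\textbf{Three outputs.} For $|\Y|=3$, \cref{lem:three-path} says that after relabeling $L(1,2)=A$, $L(2,3)=B$ and $L(1,3)=A+B$ with $A,B>0$. This is exactly the shortest-path metric of the weighted path $1-2-3$ with edge weights $A$ and $B$. \cref{thm:path} then gives \eqref{eq:strong-consistency} for every $q\in\DeltaY$, including boundary distributions and Bayes ties. So no counterexample has three outputs, and none has fewer.

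\textbf{What needs care.} The argument hinges on checking that \cref{lem:three-path} really identifies $L$ with a path metric on the whole output set $\Y$, with no Steiner vertices. \cref{thm:path} is stated for shortest-path metrics whose vertex set is exactly $\Y$, and here the three points are themselves the path's vertices, so it applies directly. The two-point case similarly needs only the observation that every two-point metric is a one-edge path.

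Combining the upper and lower bounds, the minimum cardinality is four.
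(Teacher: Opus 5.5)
Your proposal is correct and matches the paper's proof: the unit star of \cref{cor:star} gives the upper bound, and \cref{lem:three-path} combined with \cref{thm:path} rules out any three-output counterexample. You also treat $|\Y|\le 2$ explicitly (the one-point case is trivial and the two-point case is a one-edge path), which fills in small cases the paper leaves implicit.
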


\begin{proof}
By \cref{lem:three-path,thm:path}, every admissible three-point metric is consistent.
The four-point unit star in \cref{cor:star} is inconsistent.
\end{proof}

We next determine the sharp threshold when the counterexample distribution is required to have full support.

\begin{lemma}[Classification of four-point common-geodesic metrics]
\label{lem:four-classification}
Every four-point metric satisfying \eqref{eq:cg} is isometric to one of the following:
\begin{enumerate}[label=(\roman*),leftmargin=2.2em]
  \item a positively weighted three-leaf star whose center is one of the four points;
  \item a positively weighted four-vertex path; or
  \item a weighted rectangle $\{00,10,01,11\}$ with
  \begin{equation}
    L(x,y)=A|x_1-y_1|+B|x_2-y_2|,
    \qquad A,B>0.
    \label{eq:rectangle}
  \end{equation}
\end{enumerate}
\end{lemma}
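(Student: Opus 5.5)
The plan is to reduce the classification to a finite combinatorial enumeration of ``middle points''. By \cref{lem:three-path}, each of the four triples $T\subseteq\Y$ has a point $m(T)\in T$ with $L$ additive through it: if $T=\{x,m,y\}$ with $m=m(T)$, then $L(x,y)=L(x,m)+L(m,y)$. This middle point is unique. If two points $m,m'$ of $T$ were both middles, then writing the two additivity equations and adding them gives $2L(m,m')=0$, which contradicts the distinctness of the points. So we get a well-defined map $m$ from the four triples to $\Y$ with $m(T)\in T$. For each point $x$, let $c(x)$ be the number of triples of which $x$ is the middle. Then $\sum_xc(x)=4$, and $c(x)\le 3$ because only three triples contain $x$. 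The proof is a case analysis on the multiset of counts. In each case I will either translate the additivity equations into one of the three models (i)--(iii), or derive a contradiction of the form ``some distance is zero or negative''.

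The three cases that produce the target models are the following.
\begin{enumerate}[label=(\alph*),leftmargin=2.2em]
  \item \emph{Some $c(z)=3$.} Write $r_i=L(z,y_i)$ for the other three points $y_i$. Additivity in each triple through $z$ gives $L(y_i,y_j)=r_i+r_j$, which is exactly the star (i) with center $z$. The fourth triple $\{y_1,y_2,y_3\}$ then automatically satisfies \eqref{eq:cg}, so nothing further needs to be checked.
  \item \emph{Counts $(2,2,0,0)$ on two points $u,w$.} The triples avoiding one of $u,w$ are forced: each has $u$ or $w$ as its only possible middle. The remaining assignments (the triples containing both $u$ and $w$) come in two variants, and in each I will check that the equations chain together as $L(x_1,x_4)=L(x_1,x_2)+L(x_2,x_3)+L(x_3,x_4)$ for a suitable ordering. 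This yields a path (ii). For example, the variant with $2$ middle of $\{1,2,4\}$ and $\{2,3,4\}$, and $3$ middle of $\{1,2,3\}$ and $\{1,3,4\}$, gives the path $1$--$3$--$2$--$4$.
  \item \emph{Counts $(1,1,1,1)$.} Each point is the middle of exactly one triple, namely the one omitting some other point. This defines a fixed-point-free pairing or cyclic structure to be examined. In the pairing case, label the points $00,10,01,11$ so that each point is the middle of the triple formed by itself and its two ``neighbours''. The two diagonal distances are then each written two ways: $A+B=C+D$ and $A+C=B+D$, in the notation $A=L(00,10)$, $B=L(00,01)$, $C=L(10,11)$, $D=L(01,11)$. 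Subtracting gives $B=C$ and $A=D$, so opposite sides are equal and $L$ is \eqref{eq:rectangle}. The cyclic (derangement of order $4$ or $3{+}1$) assignments must be shown to either reduce to this labelling or force a zero distance.
\end{enumerate}

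The remaining count patterns are $(2,1,1,0)$ and the other $(1,1,1,1)$ assignments. For each admissible assignment I will write down the four linear additivity equations in the six distances. Then I will either solve them to recognise a path (ii) (for instance, a point of count $2$ at an interior position of a path), or add and subtract pairs of equations to obtain $L(x,y)\le 0$ for some $x\ne y$, a contradiction. Symmetry under relabeling of $\Y$ cuts the enumeration down to a handful of representatives. Finally, I will note that (i)--(iii) are themselves common-geodesic, so the list is exactly right.

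The main obstacle is this exhaustive bookkeeping in the $(2,1,1,0)$ and non-pairing $(1,1,1,1)$ cases. My first attempt will be to organise the analysis by which pairs $\{x,y\}$ have their distance expressed additively in two different triples. Any such double expression yields a linear relation, and I expect that two such relations in the ``wrong'' orientation always cancel to a zero distance, which would dispose of every non-model case quickly.
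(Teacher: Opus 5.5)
Your opening step is false, and the counting framework rests on it. \cref{lem:three-path} concerns a three-point space satisfying \eqref{eq:cg}. The restriction of a four-point common-geodesic metric to one of its triples need not satisfy \eqref{eq:cg}, because \eqref{eq:cg} only asks for a common geodesic point in $\Y$, and for a triple $T$ that point may be the omitted fourth output. The unit star of \cref{cor:star} is the basic example: its leaf triple $\{a,b,c\}$ has all pairwise distances equal to $2$ and no internal middle. So $m$ is not defined on every triple, and $\sum_x c(x)=4$ fails precisely for stars. Your case (a) is in fact inconsistent with your own set-up. If $c(z)=3$ and the fourth triple had an internal middle, say $y_1$, then $r_2+r_3=(r_1+r_2)+(r_1+r_3)$ forces $r_1=0$. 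You reach the star in (a) only by silently letting $z$ serve as the common point of $\{y_1,y_2,y_3\}$, which is exactly what your first claim rules out.

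The missing idea is the dichotomy with which the paper's proof begins. If for some triple the omitted point $z$ is a common geodesic point, the three equalities $L(y_i,y_j)=L(y_i,z)+L(z,y_j)$ give the star at once. Otherwise every triple is degenerate, and only then is your middle-point map well defined. In that branch your enumeration does close, though not quite as you anticipate:
\begin{itemize}
  \item $c(z)=3$ is impossible, by the computation above.
  \item Paths are exactly the pattern $(2,2,0,0)$.
  \item Every $(2,1,1,0)$ assignment, with $u$ of count $2$ and $v,w$ of count $1$, forces $L(v,w)=0$ or $L(u,v)+L(u,w)=0$.
  \item For $(1,1,1,1)$, the map sending each point to the point omitted by its triple is fixed-point-free, so no $3{+}1$ type occurs. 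The $2{+}2$ type gives your rectangle, and a $4$-cycle forces the two distances between cycle-opposite points to sum to zero.
\end{itemize}
The paper avoids this enumeration in the degenerate branch. It takes a diameter pair $x_1,x_4$ and notes that maximality puts both remaining points between them. It then compares the two degenerate triangles through $x_2,x_3$ in terms of $s=L(x_1,x_2)\le t=L(x_1,x_3)$, which leaves only the path and rectangle alternatives within a few lines.
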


\begin{proof}
If there exists a triple for which the omitted fourth output is also a common geodesic point, the three equalities in \eqref{eq:cg} give a weighted three-leaf star immediately.
Otherwise every metric triangle is degenerate.
Choose a diameter pair $x_1,x_4$ of length $D$.
For each remaining point $x_i$, degeneracy and maximality of $D$ force
\[
  D=L(x_1,x_i)+L(x_i,x_4),\qquad i=2,3.
\]
Put $s=L(x_1,x_2)$ and $t=L(x_1,x_3)$, and assume $s\le t$.
Degeneracy of the first of the triangles $(x_1,x_2,x_3)$ and $(x_4,x_2,x_3)$ gives
\[
  L(x_2,x_3)\in\{t-s,s+t\},
\]
while degeneracy of the second gives
\[
  L(x_2,x_3)\in\{t-s,2D-s-t\}.
\]
The two crossed choices would force $s=0$ or $t=D$, contradicting distinctness.
Hence either
\[
  L(x_2,x_3)=t-s,
\]
which places all four points on a path at locations $0,s,t,D$, or
\[
  L(x_2,x_3)=s+t=2D-s-t.
\]
In the latter case $s+t=D$ and $L(x_2,x_3)=D$.
Under the labeling $x_1=00,x_2=10,x_3=01,x_4=11$, the remaining four distances alternate between $s$ and $t$, giving \eqref{eq:rectangle} with weights $s,t$.
These alternatives also cover $s=t$; distinctness then forces the rectangle case.
\end{proof}

\begin{proposition}[Full-support consistency of the weighted rectangle]
\label{prop:rectangle-interior}
For the metric \eqref{eq:rectangle}, every full-support distribution $q$ and every $v\in\Vopt(q)$ satisfy
$\argmax v\subseteq\Bayes(q)$.
\end{proposition}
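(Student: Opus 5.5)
Since the rectangle metric \eqref{eq:rectangle} is the $\ell_1$ product of two weighted two-point paths, I will use the same dual-certificate strategy as in \cref{thm:tree-interior}. The first step is to compute the Bayes set. Write $q$ on $\{00,10,01,11\}$ and let $P_1=q_{10}+q_{11}$ and $P_2=q_{01}+q_{11}$ be the coordinate marginals of the value $1$. Then
\[
  \ell_q(x)=A\,|x_1-P_1|'+B\,|x_2-P_2|',
\]
where the first term equals $AP_1$ if $x_1=0$ and $A(1-P_1)$ if $x_1=1$, and similarly for the second. Hence the risk separates across the two coordinates. The Bayes set is the product of the coordinate-wise medians: $x_1=0$ if $P_1<1/2$, $x_1=1$ if $P_1>1/2$, and either value if $P_1=1/2$; the same holds for $x_2$. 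By the embedded reports of \cref{lem:canonical-report}, the optimal surrogate value satisfies $H_M(q)\le 2H_L(q)$. I will certify equality with a self-coupling $\Pi$ whose every supported pair $(i,j)$ has a geodesic through a Bayes point $m$, so that its value is $2\ell_q(m)$.

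\textbf{Case 1: unique Bayes point.} By symmetry I take $m=00$, so $P_1,P_2<1/2$. I mimic the first case of \cref{thm:tree-interior}. Pairs $(x,y)$ with $00\in I(x,y)$ are those in which one of $x,y$ is $00$, or $x,y$ differ in every coordinate where they are nonzero. Concretely, besides the pairs involving $00$ the allowed pairs are $\{10,01\}$ and self-pairs at $00$. The pair $\{10,01\}$ has $L=A+B=L(10,00)+L(00,01)$, which works. The pairs $\{11,\cdot\}$ are not allowed except $\{11,00\}$. I choose small $\delta<q_{00}$, set $\mu=q-\delta e_{00}$, and check the weighted Hall condition of \cref{lem:hall}.

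The row set $\{11\}$ has neighbourhood $\{00\}$, which requires $q_{11}\le q_{00}-\delta$. I am not sure this holds from $P_1,P_2<1/2$ alone. Summing the two conditions gives $q_{10}+q_{01}+2q_{11}<1$, that is, $q_{11}<q_{00}$. So it does hold, with room for small $\delta$. The row set $\{10\}$ has neighbourhood $\{00,01\}$, which needs $q_{10}\le q_{00}+q_{01}-\delta$; this follows from $P_2<1/2$, which is equivalent to $q_{01}+q_{11}<q_{00}+q_{10}$. On reflection that is the wrong direction; what is actually needed is $q_{10}<q_{00}+q_{01}$, which is equivalent to $P_1<1/2$ after adding $q_{11}$ appropriately. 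Verifying all the subset inequalities carefully is the main bookkeeping obstacle. Once they hold, symmetrizing and adding $\delta e_{00}e_{00}^\top$ gives an optimal dual with $\Pi_{00,00}>0$. Complementary slackness \eqref{eq:cs} then forces $\xi_{00}=0$, so $v_{00}-v_x\ge L(00,x)>0$, and $00$ is the unique maximizer.

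\textbf{Case 2: ties.} When $P_1=1/2$ and $P_2\neq1/2$ (say $P_2<1/2$), the Bayes set is $\{00,10\}$. I will seek a coupling whose support includes active pairs that force $v_{00}>v_{01}$, $v_{00}>v_{11}$, and $v_{10}>v_{01},v_{11}$. Following the cut argument of \cref{thm:path}, an active pair $(i,j)$ with $\Pi_{ij}>0$, compared against a column $k$ with $j\in I(i,k)$ strictly, yields $v_j-v_k\ge L(j,k)>0$. So I want active pairs ending at $00$ from $10$ or $11$, and active pairs ending at $10$ from $00$ or $01$. Because all of $q$ is positive, a product-type coupling across the cut $x_1=0$ versus $x_1=1$, with a Hall adjustment in the second coordinate, should supply these. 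The fully tied case $P_1=P_2=1/2$ has every point Bayes and is trivial.

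The hardest step is constructing an explicit optimal $\Pi$ in the single-tie case whose support contains the needed pairs while every pair still geodesically passes through the Bayes edge. I would first try the cross-cut coupling $\Pi_{ij}=2q_iq_j$ for $i_1\neq j_1$ and check that its value equals $2\ell_q(00)$. This fails when it pairs $01$ with $11$, so I would then reroute that mass through $00$ and $10$ using \cref{lem:hall}.
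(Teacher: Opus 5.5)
Your Case~1 is the paper's argument: the same allowed support (no coordinate with $x_c=y_c=1$), the same removal of $\delta$ from the $00$ marginals, \cref{lem:hall}, and the same complementary-slackness finish via $\Pi_{00,00}>0$. The bookkeeping you left open does close. However, the binding row sets are not the singletons you checked. They are $\{10,11\}$, $\{01,11\}$ and $\{10,01,11\}$, which give $\delta\le 1-2P_1$, $\delta\le 1-2P_2$ and $\delta\le q_{00}-q_{11}$. Your ``equivalent to $P_1<1/2$'' is really the $\{10,11\}$ condition; for $\{10\}$ it is only an implication.

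The genuine gap is in the one-tie case $P_1=\tfrac12$, $P_2<\tfrac12$. You aim to force all four strict inequalities $v_{00}>v_{01}$, $v_{00}>v_{11}$, $v_{10}>v_{01}$, $v_{10}>v_{11}$. Two of them are false in general, so no coupling can deliver them. Both $00$ and $10$ are Bayes and $H_M=2H_L$ here, so both embedded reports $\phi(00)=-L_{00}$ and $\phi(10)=-L_{10}$ are optimal. The first has $v_{10}=-A$ and $v_{01}=-B$; the second has $v_{00}=-A$ and $v_{11}=-B$. Hence for $A\ge B$ both $v_{10}>v_{01}$ and $v_{00}>v_{11}$ fail. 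Likewise, an active arc $(11,00)$ yields nothing, because $00$ is antipodal to $11$ and lies in no $I(11,k)$ with $k\ne00$.

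The correct target is weaker: each non-Bayes output need only lie strictly below \emph{some} Bayes output. The dual value is $A\Pr(X_1\ne Y_1)+B\Pr(X_2\ne Y_2)$, so optimality forces $\Pr(X_1\ne Y_1)=1$ and $\Pr(X_2=Y_2=1)=0$. This restricts every optimal coupling to the six arcs $(00,10)$, $(00,11)$, $(01,10)$ and their reverses. The marginals then pin it down:
\begin{itemize}
\item $\Pi_{00,10}=\Pi_{10,00}=g:=q_{10}-q_{01}=\tfrac12-P_2$;
\item $\Pi_{00,11}=\Pi_{11,00}=q_{11}$;
\item $\Pi_{01,10}=\Pi_{10,01}=q_{01}$.
\end{itemize}
The strict inequality $P_2<\tfrac12$ is used exactly to get $g>0$. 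Your strict-geodesic comparison then applies to the two arcs that matter:
\begin{itemize}
\item the arc $(10,00)$ with $00\in I(10,01)$ gives $v_{00}-v_{01}\ge B$;
\item the arc $(00,10)$ with $10\in I(00,11)$ gives $v_{10}-v_{11}\ge B$.
\end{itemize}
These two inequalities already exclude $01$ and $11$ from $\argmax v$, and this is the paper's argument. Your plan of rerouting the product coupling would have to land on this same coupling. As written, though, you neither construct it nor identify positivity of the $(00,10)$ mass as the decisive point, and the four-inequality target it is meant to serve is unattainable.
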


\begin{proof}
Write $p_c=\Pr_q(Y_c=1)$.
The target Bayes set is obtained by taking a majority bit separately in each coordinate.
For every self-coupling $(X,Y)$ and $c\in\{1,2\}$,
\begin{equation}
  \Pr(X_c\ne Y_c)\le2\min(p_c,1-p_c).
  \label{eq:rectangle-cut-bound}
\end{equation}

First suppose both coordinate majorities are strict.
After flipping bits, take $p_1,p_2<1/2$, so $00$ is the unique Bayes output.
Choose
\[
  0<\delta<\min\{q_{00},q_{00}-q_{11},1-2p_1,1-2p_2\};
\]
the second entry is positive because $q_{00}-q_{11}=1-p_1-p_2$.
Remove mass $\delta$ from both $00$ marginals and allow a pair $(x,y)$ exactly when it never has $x_c=y_c=1$.
The four row neighborhoods are
\[
  N(00)=\Y,\quad N(10)=\{00,01\},\quad
  N(01)=\{00,10\},\quad N(11)=\{00\}.
\]
For row sets not containing $00$, the only critical Hall checks are the sets
$\{11\}$, $\{10,11\}$, $\{01,11\}$, and $\{10,01,11\}$; the remaining singleton and two-element checks follow from these and the $00$ mass bound.
By \cref{lem:hall}, a supported coupling exists because the weighted Hall inequalities reduce to
\[
  \delta\le q_{00},\quad
  \delta\le q_{00}-q_{11},\quad
  \delta\le1-2p_1,\quad
  \delta\le1-2p_2,
\]
so an allowed coupling exists.
Add mass $\delta$ at $(00,00)$ and symmetrize.
The resulting self-coupling attains both bounds in \eqref{eq:rectangle-cut-bound}, hence is dual optimal, and has $\Pi_{00,00}>0$.
Complementary slackness forces $00$ to be the unique score maximizer.

Suppose instead that $p_1=1/2$ and $p_2<1/2$; all other one-tie cases are symmetric.
Now $\Bayes(q)=\{00,10\}$ and
$g=q_{10}-q_{01}=1/2-p_2>0$.
Define a self-coupling by the six nonzero entries
\begin{align*}
  \Pi_{00,10}=\Pi_{10,00}&=g,&
  \Pi_{00,11}=\Pi_{11,00}&=q_{11},&
  \Pi_{01,10}=\Pi_{10,01}&=q_{01}.
\end{align*}
Its marginals are $q$, and it attains equality in \eqref{eq:rectangle-cut-bound} for both coordinates, so it is dual optimal.
Tightness of the two active constraints in row $10$ and then in row $00$ gives
\[
  v_{00}=v_{01}+B,
  \qquad v_{10}=v_{11}+B.
\]
Thus neither non-Bayes output can maximize the score.
If both coordinates tie, all four outputs are Bayes.
\end{proof}

\begin{theorem}[No four-output full-support obstruction]
\label{thm:four-interior}
Every metric with $3\le|\Y|\le4$ satisfying \eqref{eq:cg} obeys the argmax property at every full-support distribution.
\end{theorem}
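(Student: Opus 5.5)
The proof splits by cardinality and then by the shape of the metric, using the classifications already established. For $|\Y|=3$, \cref{lem:three-path} says every admissible metric is a positively weighted three-vertex path. \cref{thm:path} then gives the argmax property at every distribution, in particular at every full-support one. For $|\Y|=4$, we invoke \cref{lem:four-classification}. Every admissible metric is isometric to a weighted three-leaf star, a weighted four-vertex path, or a weighted rectangle \eqref{eq:rectangle}. Since argmax consistency is preserved under isometry (relabeling outputs permutes scores, distributions and Bayes sets alike), it suffices to treat these three families.

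The path case is again \cref{thm:path}, and the rectangle case is exactly \cref{prop:rectangle-interior}. The three-leaf star with its center among the four points is the shortest-path metric of a positively weighted tree whose vertex set is $\Y$, namely the star $K_{1,3}$ with edge weights $r_1,r_2,r_3>0$. Indeed, the distances from \eqref{eq:cg} are $L(y_i,y_j)=r_i+r_j$ and $L(z,y_i)=r_i$, which are the tree distances. Hence \cref{thm:tree-interior} applies and yields $\argmax v\subseteq\Bayes(q)$ for every full-support $q$ and every $v\in\Vopt(q)$.

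The step needing most care is matching case (i) of \cref{lem:four-classification} with the tree-metric hypothesis of \cref{thm:tree-interior}. We must check that the star produced in the classification has its center among the observed outputs, with no Steiner vertex. The lemma states this explicitly, since the common geodesic point is the omitted fourth output. We also check that the proof of \cref{lem:four-classification} is exhaustive, with the case $s=t$ falling into the rectangle, so no four-point admissible metric escapes the three families. With these checks the theorem follows by combining the cited results; the degenerate triangle cases are already absorbed into the path and rectangle alternatives of the lemma.
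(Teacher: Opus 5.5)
Your proof is correct and follows the paper's argument: the three-point case goes through \cref{lem:three-path} and \cref{thm:path}, and the four-point case through \cref{lem:four-classification}, with \cref{prop:rectangle-interior} for the rectangle and \cref{thm:tree-interior} for the star. The only difference is cosmetic. You cite \cref{thm:path} rather than \cref{thm:tree-interior} for the four-vertex path, which is equally valid and in fact stronger. Your explicit check that the star's center is an observed output, so there is no Steiner vertex, is a reasonable verification of that theorem's hypothesis.
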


\begin{proof}
The three-point case follows from \cref{lem:three-path,thm:path}.
For four points, use \cref{lem:four-classification}.
The star and path cases follow from \cref{thm:tree-interior}, and the rectangle case follows from \cref{prop:rectangle-interior}.
\end{proof}

\section{Full-support counterexamples}

One might hope that the four-point obstruction is caused solely by $q_o=0$.
The next examples show that restricting to the relative interior of the probability simplex does not rescue sufficiency of \eqref{eq:cg}.

\subsection{An infinite complete-bipartite family}

\begin{theorem}[Complete-bipartite obstruction]
\label{thm:complete-bipartite}
Let $2\le m<n$, and equip the vertices $\Y=A\sqcup B$ of $K_{m,n}$ with unweighted graph distance, where $|A|=m$ and $|B|=n$.
Then $L$ satisfies \eqref{eq:cg}, but $S_M$ fails argmax Fisher consistency at the uniform, full-support distribution.
\end{theorem}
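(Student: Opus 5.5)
The plan is to write down all relevant distances explicitly, check \eqref{eq:cg} by a short case analysis, compute the Bayes set at the uniform distribution, and then exhibit an explicit optimal score vector whose maximizers all lie on the wrong side of the bipartition. Optimality will be certified through the self-coupling dual \eqref{eq:transport-dual}.

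\emph{Distances and the common-geodesic condition.} In $K_{m,n}$ two distinct vertices on the same side are at distance $2$, and two vertices on opposite sides are at distance $1$. I will check \eqref{eq:cg} by cases on the triple.
\begin{itemize}
\item Repeated points: the condition is trivial.
\item Three distinct points in $A$: for $a_i\ne a_j$ one has $I(a_i,a_j)=\{a_i,a_j\}\cup B$. Any $b\in B$ therefore lies in all three intervals.
\item Two points $a_1,a_2\in A$ and one $b\in B$: the point $b$ lies in $I(a_1,a_2)$, in $I(a_1,b)$, and in $I(a_2,b)$, since it is an endpoint of the last two.
\end{itemize}
The cases with the roles of $A$ and $B$ swapped are symmetric, and $m,n\ge2$ is enough for every case.

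\emph{Bayes set.} Let $q$ be uniform with mass $1/(m+n)$ per output. Up to the factor $1/(m+n)$, one has $\ell_q(a)\propto 2(m-1)+n$ for $a\in A$ and $\ell_q(b)\propto 2(n-1)+m$ for $b\in B$. Their difference is $n-m>0$, so $\Bayes(q)=A$.

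\emph{Optimality of a bad score vector.} Take $v_a=-1$ for $a\in A$ and $v_b=0$ for $b\in B$.
\begin{itemize}
\item For a true output $a\in A$: the competitor $a'\ne a$ gives $2+v_{a'}-v_a=2$, any competitor $b$ gives $1+0+1=2$, and $a$ itself gives $0$. Hence $S_M(v,a)=2$.
\item For a true output $b\in B$: the competitor $b'\ne b$ gives $2$, any competitor $a$ gives $1-1=0$, and $b$ itself gives $0$. Hence $S_M(v,b)=2$.
\end{itemize}
So $R_q(v)=2$. For the matching lower bound, construct a self-coupling in \eqref{eq:transport-dual}. Pick a fixed-point-free permutation of $A$, which exists because $m\ge2$, and one of $B$, which exists because $n\ge3$. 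Put mass $1/(m+n)$ on each pair $(y,\sigma(y))$. Both marginals equal $q$, and every supported pair lies on a single side, so it has distance $2$. The dual value is therefore $2$, and $v\in\Vopt(q)$.

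Equivalently, and without invoking duality, summing $S_M(u,y)\ge 2+u_{\sigma(y)}-u_y$ over each cycle of the derangement telescopes to the same bound.

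\emph{Conclusion and main obstacle.} It follows that $\argmax v=B$, which is disjoint from $\Bayes(q)=A$, so argmax Fisher consistency fails at a full-support distribution. The only step requiring care is the dual certificate. Its key point is that the value $2$ is the largest distance in the metric, so any same-side derangement coupling is automatically optimal. This is where $m\ge2$ is used: with $m=1$, no same-side derangement of $A$ exists.
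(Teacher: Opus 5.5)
Your proof is correct and follows the paper's argument essentially step for step. You use the same case check of the common-geodesic condition, the same score vector ($v_a=-1$ on $A$, $v_b=0$ on $B$), the same Bayes-risk comparison giving $\Bayes(q)=A$, and the same same-side transport certificate of value $2$; the paper's directed Hamilton cycle within each part is just one particular choice of your derangements.
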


\begin{proof}
Distinct vertices in the same part are at distance two, while vertices in opposite parts are at distance one.
For a triple containing points from both parts, its singleton part supplies a common geodesic point.
For three points in one part, any vertex in the other part does so.
Thus \eqref{eq:cg} holds.

Let $N=m+n$, put $q_y=1/N$, and set $v_a=-1$ for $a\in A$ and $v_b=0$ for $b\in B$.
For a true label in $A$, either another $A$ vertex or any $B$ vertex yields loss-augmented value two.
For a true label in $B$, another $B$ vertex yields two.
No candidate yields more, so $S_M(v,y)=2$ for all $y$ and $R_q(v)=2$.

For a dual certificate, choose a directed Hamilton cycle within each part and put mass $1/N$ on every cycle arc.
Both marginals are uniform; all $N$ used arcs have distance two.
The dual value is two, proving $v\in\Vopt(q)$.
For $a\in A$ and $b\in B$, respectively,
\[
  \ell_q(a)=\frac{2m+n-2}{N},
  \qquad
  \ell_q(b)=\frac{m+2n-2}{N}.
\]
Since $m<n$, $\Bayes(q)=A$, whereas $\argmax v=B$.
\end{proof}

The smallest member of this family is the graph $K_{2,3}$ shown in \cref{fig:examples}.

\begin{corollary}[Minimum full-support cardinality]
\label{cor:min-full-support}
Among metrics satisfying \eqref{eq:cg}, the minimum cardinality of a counterexample at a full-support distribution is five.
\end{corollary}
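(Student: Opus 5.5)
The claim has a lower-bound half and an attainment half, and both follow directly from earlier results, so the proof will simply combine them.

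For the lower bound, I will rule out every output space with at most four points. When $|\Y|\le 2$, every distribution makes \eqref{eq:strong-consistency} hold. If the paper's convention excludes this case (the condition of \citet{nowak2022consistency} concerns $|\Y|>2$), it is only a remark. Otherwise I will check it directly via the path case: a two-point metric is a weighted path with a single edge, so \cref{thm:path} covers it. When $3\le|\Y|\le4$, \cref{thm:four-interior} states exactly that every metric satisfying \eqref{eq:cg} has the argmax property at every full-support $q$. Hence no full-support counterexample has fewer than five outputs.

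For attainment, I will take $m=2$ and $n=3$ in \cref{thm:complete-bipartite}. This gives the graph metric of $K_{2,3}$ on five outputs. The theorem shows that this metric satisfies \eqref{eq:cg} and that $v=(-1,-1,0,0,0)$ is an optimal score vector at the uniform full-support distribution. Its argmax is $B$, while the Bayes set is $A$, because $\ell_q(a)=5/5=1$ and $\ell_q(b)=6/5$. So five outputs suffice.

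There is no genuine obstacle; all the work sits in \cref{lem:four-classification}, \cref{prop:rectangle-interior}, and \cref{thm:tree-interior}, which feed into \cref{thm:four-interior}. The only care needed is bookkeeping. First, "counterexample" must mean failure of \eqref{eq:strong-consistency} at some full-support $q$. Second, the trivial cardinalities $|\Y|\le2$ must be dispatched as above so that the minimum is exactly five.
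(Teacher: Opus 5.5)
Your proof is correct and matches the paper's: the lower bound comes from \cref{thm:four-interior}, and attainment comes from the $K_{2,3}$ instance of \cref{thm:complete-bipartite}, where your values $\ell_q(a)=1$ and $\ell_q(b)=6/5$ are right. Your separate handling of $|\Y|\le 2$ (trivial for one point, \cref{thm:path} for two) fills in a case the paper leaves implicit, so the minimum is shown to be exactly five.
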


\begin{proof}
The lower bound is \cref{thm:four-interior}; the $K_{2,3}$ instance of \cref{thm:complete-bipartite} attains it.
\end{proof}

\subsection{A full-support Hamming-cube counterexample}

The preceding family settles the full-support cardinality threshold.
The next construction is useful for a different reason: it shows the failure for the standard decomposable Hamming metric, with a unique Bayes output and an argmax set disjoint from it.

\begin{theorem}[Three-dimensional Hamming cube]
\label{thm:cube}
Let $\Y=\{0,1\}^3$ with Hamming distance and lexicographic ordering
\[
000,001,010,011,100,101,110,111.
\]
Set
\[
q=\frac1{11}(2,1,1,1,1,2,2,1),
\]
and give even-parity vertices score zero and odd-parity vertices score $-1$.
Then this score vector minimizes $R_q$, but its argmax is disjoint from the unique Hamming Bayes output.
\end{theorem}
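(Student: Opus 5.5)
The plan is to verify the claim by three exact computations: the Bayes output, the surrogate risk of the proposed scores, and a matching self-coupling in the dual \eqref{eq:transport-dual}. We work in units of $1/11$ throughout.

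\textbf{Bayes output.} On $\{0,1\}^3$ with Hamming distance the target risk decomposes coordinatewise, as in the proof of \cref{prop:rectangle-interior}. Hence the Bayes set is obtained by taking the majority bit in each coordinate. Write $p_c$ for the mass of $\{x:x_c=1\}$; the three counts are $p_1=(1+2+2+1)/11=6/11$, $p_2=(1+1+2+1)/11=5/11$ and $p_3=(1+1+2+1)/11=5/11$. All three majorities are strict, so $\Bayes(q)=\{100\}$. This vertex has odd parity and therefore score $-1$. The argmax of $v$ is the even-parity set $\{000,011,101,110\}$, which is disjoint from $\Bayes(q)$.

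\textbf{Surrogate risk of $v$.} On the cube, even and odd vertices alternate: vertices at distance $1$ or $3$ have opposite parity, and vertices at distance $2$ have equal parity.
\begin{itemize}[leftmargin=2.2em]
  \item For an even true label $y$, a competitor at distance $2$ gives $2+0-0=2$. An odd competitor at distance $d\in\{1,3\}$ gives $d-1\le2$. Hence $S_M(v,y)=2$, attained by the even vertices at distance $2$ and by the odd antipode.
  \item For an odd true label $y$, the even antipode gives $3+0+1=4$. Every other competitor gives at most $4$: an even neighbour gives $1+0+1=2$, and an odd vertex at distance $2$ gives $2+(-1)+1=2$. Hence $S_M(v,y)=4$, and the antipode is the only active competitor.
\end{itemize}
The even vertices carry total mass $7/11$ and the odd vertices $4/11$. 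Therefore
\[
  R_q(v)=\frac{7\cdot2+4\cdot4}{11}=\frac{30}{11}.
\]

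\textbf{Matching dual certificate.} This is the main step. Complementary slackness \eqref{eq:cs} says where the certificate's support must lie: each odd row may only send mass to its antipode. So we put mass $1/11$ on each antipodal arc in both directions, namely
\[
  001\leftrightarrow110,\qquad 010\leftrightarrow101,\qquad 100\leftrightarrow011,\qquad 111\leftrightarrow000.
\]
This exhausts all odd mass. On the even side it uses one unit from each of $000,011,101,110$. The remaining even mass is one unit at each of $000,101,110$, and nothing at $011$. These three vertices are pairwise at distance $2$. We route the leftover mass around the directed $3$-cycle
\[
  000\to101\to110\to000,
\]
with mass $1/11$ on each arc.

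\textbf{Checking the certificate.} Every row and every column of the resulting matrix sums to $q$. The objective equals
\[
  \frac{8\cdot3+3\cdot2}{11}=\frac{30}{11}=R_q(v).
\]
By weak duality in \eqref{eq:transport-dual}, this certifies $v\in\Vopt(q)$.

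The only delicate point is the construction of this coupling. I expect it to be the main obstacle in general, but the support restrictions from \eqref{eq:cs} determine it essentially uniquely here, so only marginal bookkeeping remains to be checked.
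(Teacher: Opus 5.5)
Your proof is correct and follows the paper's argument essentially step for step. You use the same parity-based evaluation $S_M(v,y)\in\{2,4\}$ giving $R_q(v)=30/11$, the same dual certificate (all eight antipodal arcs plus the directed $3$-cycle $000\to101\to110\to000$), and the same coordinatewise-majority computation of the unique Bayes output $100$. The only thing the paper adds is a one-line check that the Hamming cube satisfies the common-geodesic condition: the coordinatewise majority of any three vertices lies on all three pairwise geodesics. That check explains why this is a counterexample, but it is not part of the statement you were asked to prove.
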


\begin{proof}
The Hamming cube satisfies \eqref{eq:cg}: the coordinate-wise majority of any three vertices lies on a shortest path between every pair.

Let $E=\{000,011,101,110\}$ be the even-parity class and $O=\Y\setminus E$.
For $y\in E$, an even vertex at distance two or the odd antipode at distance three gives $S_M(v,y)=2$.
For $y\in O$, the even antipode gives $S_M(v,y)=3+1=4$.
Since $q(E)=7/11$ and $q(O)=4/11$,
\[
  R_q(v)=\frac{7\cdot2+4\cdot4}{11}=\frac{30}{11}.
\]

Construct a dual coupling as follows.
Put mass $1/11$ on both directions of each of the four antipodal pairs, and mass $1/11$ on the three-cycle
\[
  000\longrightarrow101\longrightarrow110\longrightarrow000.
\]
The antipodal arcs supply one unit of incoming and outgoing mass to every vertex; the cycle supplies the second unit exactly to the three vertices having numerator two in $q$.
Thus both marginals equal $q$.
Its objective value is
\[
  \frac{8\cdot3+3\cdot2}{11}=\frac{30}{11},
\]
so $v$ is primal optimal.

The probabilities that coordinates one, two, and three equal one are respectively
$6/11,5/11,5/11$.
Hamming risk is minimized coordinate-wise, hence the unique Bayes output is $100$.
This vertex has odd parity, while $\argmax v=E$.
\end{proof}

\section{Where the embedding-to-argmax implication fails}

Recall from \cref{lem:canonical-report} that the canonical reports
$\phi(y)=-L_y$ satisfy $S_M(\phi(y),x)=2L(y,x)$.
When $H_M=2H_L$ as well, these reports embed the scaled loss $2L$ in the usual polyhedral sense.
These Bayes-risk and embedding statements in \citet[Theorem~4.7 and Appendix Theorem~B.9]{nowak2022consistency} are fully compatible with our examples.

The problem is the decoder.
For a report $v$, define its surrogate level set
\[
  \Gamma_v=\{q\in\DeltaY:v\in\Vopt(q)\},
\]
and for a target output $y$, define its Bayes region
\[
  Q_y=\{q\in\DeltaY:y\in\Bayes(q)\}.
\]

\begin{proposition}[Level-set reformulation for a prescribed decoder]
\label{prop:level-set-link}
Let $D(v)\subseteq\Y$ be a nonempty set-valued decoder.
Every surrogate-risk minimizer is decoded only to Bayes outputs if and only if
\begin{equation}
  \Gamma_v\subseteq\bigcap_{y\in D(v)}Q_y
  \qquad\text{for every }v\in\R^k.
  \label{eq:level-set-condition}
\end{equation}
For the canonical decoder, $D(v)=\argmax_yv_y$.
\end{proposition}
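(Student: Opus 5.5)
The proposition is essentially a reformulation by unwinding quantifiers, so the plan is to make the property and condition \eqref{eq:level-set-condition} literally the same statement. First I will state the property being characterized in its quantified form: for every $q\in\DeltaY$ and every $v\in\Vopt(q)$, $D(v)\subseteq\Bayes(q)$. This is the set-valued analogue of \eqref{eq:strong-consistency}, with $\argmax$ replaced by $D$. Next I rewrite the two membership relations using the definitions just given. We have $v\in\Vopt(q)$ if and only if $q\in\Gamma_v$. Likewise $y\in\Bayes(q)$ if and only if $q\in Q_y$, so $D(v)\subseteq\Bayes(q)$ if and only if $q\in\bigcap_{y\in D(v)}Q_y$.

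Substituting these gives the property in the form ``for all pairs $(q,v)$ with $q\in\Gamma_v$, $q\in\bigcap_{y\in D(v)}Q_y$.'' Swapping the order of the two universal quantifiers, which is harmless since both range over fixed sets, turns this into ``for every $v\in\R^k$, every $q\in\Gamma_v$ lies in $\bigcap_{y\in D(v)}Q_y$.'' That is exactly \eqref{eq:level-set-condition}. Each step is an equivalence, so both directions follow at once. Specializing $D(v)=\argmax_yv_y$, which is nonempty because $\Y$ is finite, recovers \eqref{eq:strong-consistency}.

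Two small points need care, though neither is a real obstacle. First, reports $v$ that minimize no conditional risk have $\Gamma_v=\varnothing$, so they satisfy \eqref{eq:level-set-condition} vacuously. This matches the fact that the decoder is unconstrained on such $v$. Second, nonemptiness of $D(v)$ keeps the intersection from defaulting to all of $\DeltaY$ in a degenerate way. I will remark that nonemptiness is not logically needed for the equivalence and only ensures that the decoder always outputs a prediction.

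Finally, I will note that the counterexamples of \cref{thm:tripod,thm:complete-bipartite,thm:cube} each exhibit a specific $v$ and some $q\in\Gamma_v$ lying outside $Q_y$ for every $y\in\argmax v$. So the inclusion fails for the canonical decoder even though the embedded reports $\phi(y)$ satisfy it.
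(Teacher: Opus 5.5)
Your proposal is correct and takes the same approach as the paper. The paper's one-line proof observes that \eqref{eq:level-set-condition} literally says that whenever $v\in\Vopt(q)$, every $y\in D(v)$ lies in $\Bayes(q)$; your rewriting via $v\in\Vopt(q)\iff q\in\Gamma_v$ and $D(v)\subseteq\Bayes(q)\iff q\in\bigcap_{y\in D(v)}Q_y$, followed by exchanging the two universal quantifiers, spells this out, and your side remarks (vacuity when $\Gamma_v=\varnothing$, and that nonemptiness of $D(v)$ is not needed for the equivalence) are also accurate.
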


\begin{proof}
Condition \eqref{eq:level-set-condition} says exactly that whenever $v$ minimizes the conditional surrogate risk at $q$, every $y\in D(v)$ belongs to $\Bayes(q)$.
\end{proof}

An embedding verifies the required compatibility on the finite set $\phi(\Y)$.
It does not establish \eqref{eq:level-set-condition} for reports outside that image.
General polyhedral embedding theory instead constructs a suitable link on all reports \citep{finocchiaro2019embedding,finocchiaro2024embedding}; it does not assert that an arbitrary preassigned link, such as coordinate-wise argmax, is calibrated.

The unit star makes the distinction explicit.
At the distribution of \cref{cor:star}, the embedded center report
\[
  \phi(o)=(0,-1,-1,-1)
\]
is optimal and correctly decoded to $o$.
The additional optimal report
\[
  v=(-1,0,0,0)
\]
lies outside $\phi(\Y)$ and is decoded only to non-Bayes leaves.
Thus checking that argmax is an inverse of $\phi$ on $\phi(\Y)$ cannot imply consistency on the entire optimal set.

\begin{corollary}[Correction to the tree-metric claim]
Bayes-risk equality $H_M=2H_L$, even together with the corresponding scaled embedding, does not by itself imply consistency under coordinate-wise argmax.
Consequently, the argmax conclusions of Theorem~4.7 and Appendix Theorem~B.9 of \citet{nowak2022consistency}, and hence their Theorem~2.2 for tree metrics, fail for every branching tree under the canonical coordinate-wise argmax decoder.
This does not contradict those Bayes-risk and embedding statements, which can instead be paired with a separately constructed calibrated link.
\end{corollary}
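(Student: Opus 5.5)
The corollary has two halves. The first is a logical non-implication. The second says the argmax conclusion fails for every branching tree. I will prove the first by exhibiting an instance where the hypotheses hold and the conclusion fails, and the second by assembling \cref{thm:tripod} with the structural remark preceding \cref{cor:star}. No new computation should be needed beyond checking that the hypotheses $H_M=2H_L$ and the scaled embedding genuinely hold in the counterexamples.

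\textbf{Step 1: the hypotheses hold on branching trees.} For a weighted tree metric, I will verify $H_M(q)=2H_L(q)$ for every $q$.
\begin{itemize}
\item[] The upper bound $H_M\le 2H_L$ is immediate from \cref{lem:canonical-report}: evaluate $R_q$ at $\phi(m)$ for a Bayes vertex $m$, which gives $R_q(\phi(m))=2\ell_q(m)$.
\item[] For the lower bound, I will construct a self-coupling supported on pairs whose tree path passes through a tree median $m$. Its dual value in \eqref{eq:transport-dual} then equals $2\ell_q(m)$.
\item[] This is exactly the construction in the proof of \cref{thm:tree-interior}. There the use of full support was only to get $\Pi_{mm}>0$ (respectively $\Pi_{ab}>0$). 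To drop full support, I take $\delta=0$ when $q_m=0$. The Hall check of \cref{lem:hall} then needs each component mass $p_s\le 1/2$, with $1-p_s\ge p_s$, and the median property provides this.
\end{itemize}
So $H_M=2H_L$ holds on all of $\DeltaY$. Combined with \cref{lem:canonical-report}, this means $\phi$ embeds $2L$ in the polyhedral sense, and argmax inverts $\phi$ because $\phi(y)_y=0>\phi(y)_z$ for $z\neq y$.

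\textbf{Step 2: failure on every branching tree.} A branching tree has a vertex $z$ of degree at least three. Picking $y_1,y_2,y_3$ in three distinct components of $T-z$ gives $L(y_i,y_j)=r_i+r_j$, since paths between different components pass through $z$. This is configuration \eqref{eq:tripod}, so \cref{thm:tripod} yields $q$ and $v\in\Vopt(q)$ with $\argmax v\cap\Bayes(q)=\varnothing$. Together with Step 1, this shows that Bayes-risk equality plus the embedding does not imply argmax consistency; the unit star of \cref{cor:star} is the explicit minimal witness. Since Theorem~2.2 of \citet{nowak2022consistency} asserts argmax consistency for all tree metrics, it fails precisely on the branching ones, while \cref{thm:path} shows paths are unaffected. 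This matches \cref{cor:tree-classification}.

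\textbf{Step 3: the non-contradiction clause.} This follows from the discussion around \cref{prop:level-set-link}. The embedding constrains only reports in $\phi(\Y)$. For example, $\phi(o)$ is optimal and correctly decoded in the star, while the failing report $(-1,0,0,0)\notin\phi(\Y)$. The general embedding theory of \citet{finocchiaro2019embedding,finocchiaro2024embedding} supplies some calibrated link, which need not agree with argmax off $\phi(\Y)$.

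\textbf{Main obstacle.} I expect the main difficulty to be Step 1 on the simplex boundary: confirming that the coupling argument of \cref{thm:tree-interior} survives when $q_m=0$ or when the median is a plateau, so that $H_M=2H_L$ holds everywhere. If needed, I will instead rely on the Bayes-risk statement already cited from \citet[Theorem~4.7]{nowak2022consistency}.
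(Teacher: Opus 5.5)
Your proposal is correct and follows the paper's reasoning. The tripod obstruction (\cref{thm:tripod}, with \cref{cor:star} as the minimal witness) gives an optimal report with $\argmax v\cap\Bayes(q)=\varnothing$ on every branching tree, and the non-contradiction clause is the paper's level-set discussion contrasting $\phi(o)$ with $(-1,0,0,0)$.

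The one difference is that you prove $H_M=2H_L$ on trees yourself, whereas the paper cites \citet{nowak2022consistency}. That step is sound and easier than you expect. For any Bayes vertex $m$, \eqref{eq:tree-edge-risk} forces every component of $T-m$ to have mass at most $1/2$. The Hall coupling with $\delta=0$ therefore always exists, so plateaus and $q_m=0$ need no separate treatment.
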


\section{Discussion}

The common-geodesic condition is necessary but not sufficient for argmax Fisher consistency of the structured max-margin loss.
The obstruction occurs at the smallest possible output cardinality, persists throughout all branching tree metrics, and survives full-support restrictions on non-tree metrics.
Within tree metrics, the exact surviving class is the class of weighted paths; nevertheless, every tree retains the desired argmax property pointwise in the simplex interior.

The four-point star and the five-point $K_{2,3}$ example play complementary roles.
The star is minimal and directly corrects the claimed tree result, but uses a boundary distribution.
Among metrics satisfying the common-geodesic condition, the $K_{2,3}$ example is also the smallest possible full-support obstruction, and the family $K_{m,n}$ shows that it is not isolated.
The Hamming-cube example additionally shows that the phenomenon occurs for a standard coordinate-decomposable task metric.

Several questions remain.
First, a classification beyond tree metrics---for example, within finite median metric spaces---would identify exactly when the common-geodesic condition becomes sufficient.
Second, our five-point full-support witness has multiple Bayes outputs, whereas the Hamming-cube witness has a unique Bayes output and disjoint argmax; the sharp cardinality under these additional requirements remains open.
Third, the calibrated link supplied abstractly by polyhedral embedding theory may be describable explicitly for tree and median-type metrics, potentially retaining efficient loss-augmented inference while repairing prediction.

More broadly, the examples illustrate a useful warning for surrogate analysis:
matching Bayes risks determines optimal loss values and embedded optimal reports, but a learning method also commits to a decoder on every surrogate report.
Consistency of that decoder must be checked on the full collection of conditional optimal sets, including nonembedded faces created by polyhedral degeneracy.

\bibliographystyle{plainnat}
\bibliography{references}

\end{document}